\pdfoutput=1
\documentclass[11pt]{article}
\usepackage[a4paper,margin=1in]{geometry}
\usepackage[T1]{fontenc}
\usepackage{amsmath,amsthm,mathtools}
\usepackage{newtxtext,newtxmath}
\usepackage{graphicx,booktabs,multirow,xcolor}
\usepackage[numbers,sort&compress]{natbib}
\usepackage[colorlinks,linkcolor=black,citecolor=black,urlcolor=black]{hyperref}
\usepackage{enumitem}
\setlist{nosep,leftmargin=*}

\newtheorem{proposition}{Proposition}
\newtheorem{theorem}[proposition]{Theorem}
\newtheorem{corollary}[proposition]{Corollary}
\newtheorem{lemma}[proposition]{Lemma}
\theoremstyle{definition}
\newtheorem{example}[proposition]{Example}
\theoremstyle{remark}

\newcounter{algo}
\newcommand{\Fbar}{\bar F}
\newcommand{\Gbar}{\bar G}
\newcommand{\Hit}{\mathrm{Hit}}
\newcommand{\Prob}{\mathbb{P}}
\newcommand{\E}{\mathbb{E}}

\title{The Price of Scale in Proactive Memory Recall}
\author{Yichuan Chen\thanks{Corresponding author: \texttt{smallboychen@gmail.com}.}
\and Xin Wang\\[2pt]
\small School of Art and Design, Zhejiang Sci-Tech University, Hangzhou 310018, China}
\date{}

\begin{document}
\maketitle

\begin{abstract}
A long-lived assistant that brings up memories without being asked must decide at every turn whether anything in its store is worth an interruption. We show that this decision is a multiple-testing problem whose difficulty grows with the store size $N$. On streaming replays of LongMemEval, a similarity threshold tuned to $5\%$ false alarms at $N=8$ reaches $84$--$100\%$ at $N=2048$, across embedding models and for explicit and implicit cues, and LoCoMo, whose stores reach $N=256$, shows the same growth; LLM judges show it from a lower base. We prove that a gate which thresholds the best of $N$ independent irrelevant scores at false-alarm rate $\alpha$ operates at a per-item false-positive rate of about $-\ln(1-\alpha)/N$, which pins its hit rate to the far left of the single-comparison ROC curve and, under regular variation, makes it regularly varying in $N$ with index $-\kappa$. Under a global threshold, replayed hit rates stay within the bounds this gives for heterogeneous cues up to $0.3$ points, and hit rates carry over from half of the questions to the other half to within $6.0$ points. At exactly matched false alarms, with explicit questions and a small embedding model, the deployed score, which adds per-cue normalisation and a normalised lexical channel, recalls $14$--$26$ points more than raw cosine. A cue-swap conformal gate, which ranks the current cue among other cues on the same store, has a finite-sample false-alarm guarantee under exchangeability. With a background drawn from the user's unfiltered history it keeps false alarms at $3$--$6\%$ on LongMemEval at the hit rate of a label-tuned threshold; on LoCoMo it overshoots or turns conservative depending on where its reference cues come from. Used as a drop-in gate in a chronological replay, with the user's own earlier messages as references, it fires on $5$--$8\%$ of explicit questions on stores without their answer and speaks on $6$--$8\%$ of session-opening turns once the store holds about $20$ turns, where at $N=2048$ the deployed gate speaks on $57\%$ and fixed thresholds on $78$--$87\%$, at a median of $0.4$\,s of CPU time per decision on one thread. A cross-encoder reranker pays the same price of scale, and on a prospective-memory benchmark whose triggers are harmless on their own, similarity gates recall at most $9\%$ of constraints at controlled false alarms. One human and two LLM annotators judge most false alarms unhelpful for a response. Code and data: \url{https://github.com/RiverChenStillCompiling/price-of-scale}.
\end{abstract}

\section{Introduction}

Human recall follows the situation. After a long run a cold drink comes to mind, while a bath taken ten years ago stays dormant, although both are stored. Assistants that live with a user for months are now expected to do something similar. While helping with today's task, they should notice that something said weeks ago matters and say so without being asked~\citep{wu2026remember,zeng2026memoproagent,zhang2026triggerbench,liu2026pmbench}. In question answering the query says what to retrieve. Here the assistant has to decide on its own whether to speak, and every unwarranted reminder costs the user attention.

The common design compares the current situation with every stored memory and speaks if the best match is good enough. This is a multiple comparison, and the number of comparisons grows with the life of the assistant. A threshold that keeps quiet with eight memories fires often with two thousand, because the largest of many irrelevant similarity scores keeps rising. Work on retrieval and sequence search has long recognised that significance depends on database size~\citep{karlin1990methods,reimers2021curse}. We work out what this implies for an assistant that must stay quiet and still remember, and how much it can recall once its false-alarm rate is held fixed.

This paper makes four contributions.
\begin{itemize}
\item \textbf{Measurement.} We replay LongMemEval~\citep{wu2025longmemeval} and LoCoMo~\citep{maharana2024locomo} as growing memory stores and measure false alarms and hits of eight gates, a cross-encoder and two LLM judges from $N=8$ to $N=2048$ (\S\ref{sec:fa}), in random and chronological replays and on the prospective-memory benchmark TriggerBench~\citep{zhang2026triggerbench} (\S\ref{sec:temporal}, \S\ref{sec:triggerbench}). Fixed thresholds lose control within a few doublings; per-cue $z$-scores with a Gaussian threshold law slow the growth; LLM judges show the same growth from a lower base.
\item \textbf{Theory.} We prove that a gate which thresholds the best of $N$ independent irrelevant scores at false-alarm rate $\alpha$ works at per-item false-positive rate $p_N=1-(1-\alpha)^{1/N}\approx -\ln(1-\alpha)/N$, give an exact integral for its hit rate, and bound it within a factor $(1-\alpha)^{(N-1)/N}$ of the ROC value $R(p_N)$ (Proposition~\ref{prop:sandwich}). Under regular variation the hit rate decays as $N^{-\kappa}$ up to a slowly varying factor, with an explicit constant (Theorem~\ref{thm:power}); for a binormal model we give two-sided capacity bounds (Corollary~\ref{cor:capacity}); a covariance identity describes heterogeneous cues under a global threshold (Corollary~\ref{cor:mixture}). Replayed hit rates stay within the bounds of Corollary~\ref{cor:mixture} up to $0.3$ points, and hit rates carry over to held-out questions to within $6.0$ points (\S\ref{sec:theory-check}).
\item \textbf{Mechanism.} The largest irrelevant score grows much faster in the lexical than in the semantic channel, at the mean and at the $95\%$ quantile (\S\ref{sec:tails}), so one threshold law cannot serve both. At exactly matched false alarms, per-cue normalisation of the semantic score and a normalised lexical channel account for the advantage of the deployed score over raw cosine; its eligibility rules contribute nothing (\S\ref{sec:matched}).
\item \textbf{Method and artefact.} A cue-swap conformal gate (Proposition~\ref{prop:swap}, Algorithm~\ref{alg:swap}) holds false alarms near the nominal level on LongMemEval without labels in its threshold or background, at the hit rate of a label-tuned threshold, and in a chronological replay with only the user's earlier messages as references its false-alarm and firing rates stay flat as the store grows; on LoCoMo and TriggerBench, where its reference cues come from another conversation or scenario, it loses that calibration (\S\ref{sec:swapexp}, \S\ref{sec:temporal}). We release the replay code, the annotated data, the implicit cues and a stand-alone implementation of the gates.
\end{itemize}
Finally, one human and two LLM annotators label 240 false alarms (\S\ref{sec:annotation}); by majority vote $3$--$12\%$ of those labelled after a translation fix are useful near misses.

\section{Related work}

\paragraph{Proactive memory.} Recent agents add a component that decides whether to inject a reminder. \citet{wu2026remember} train a memory agent alongside an action agent and show that selective intervention beats always-on injection; MemoProAgent~\citep{zeng2026memoproagent} gates memories from a year-long personal store and finds timing as hard as content; PRISM~\citep{fu2026prism} reports false-alarm rates above $50\%$ for proactive baselines. TriggerBench~\citep{zhang2026triggerbench} and PM-Bench~\citep{liu2026pmbench} evaluate prospective memory and observe a precision--recall trade-off and an ``always remind'' failure mode. MemUse~\citep{sumida2026memuse} finds that a system answering $79\%$ of direct memory questions uses $8\%$ of relevant facts in conversation. Earlier agent memories retrieve by relevance, recency and importance~\citep{park2023generative,packer2023memgpt}, and recent work studies admission into memory~\citep{zhang2026amac}, spreading activation~\citep{jiang2026synapse} and abstention when retrieving for coding agents~\citep{iscan2026learning}. These systems learn or prompt their gates; none formally characterises how a gate's errors scale with store size or provides a finite-sample false-alarm guarantee.

\paragraph{When to retrieve.} Adaptive retrieval decides whether a language model should consult external text, from token confidence~\citep{jiang2023flare}, reflection tokens~\citep{asai2024selfrag}, question complexity~\citep{jeong2024adaptiverag} or entity popularity~\citep{mallen2023trust}; RetrievalQA~\citep{zhang2024retrievalqa} notes that calibration-based variants depend on threshold tuning. These methods decide whether the model needs outside knowledge. Proactive recall asks whether any stored item is relevant at all, a question whose answer depends on the size of the store.

\paragraph{Score distributions and significance.} Information retrieval models non-relevant scores as exponential and relevant ones as normal~\citep{manmatha2001modeling,arampatzis2009stop}; BLAST scales its expected number of chance hits with database size~\citep{karlin1990methods}; dense retrievers return irrelevant passages more often as the index grows~\citep{reimers2021curse}; and some items are nearest neighbours of many queries in high dimension~\citep{radovanovic2010hubs}. Our growth laws for the null maximum are consequences of these tail shapes and classical extreme-value theory~\citep{leadbetter1983extremes}. We add an analysis of the hit rate that remains at a fixed false-alarm rate and of its consequences for agent memory.

\paragraph{Error control and adaptive thresholds.} Neyman--Pearson classification fixes the type-I error of a classifier and maximises its power~\citep{tong2018np}; we fix the false-alarm rate of a gate whose number of comparisons grows, which is what makes the attainable power shrink. Conformal p-values give finite-sample validity under exchangeability~\citep{vovk2005algorithmic,bates2023outliers,jin2023selection}, conformal risk control bounds the expectation of monotone losses~\citep{angelopoulos2024crc}, weighted variants bound the cost of departures from exchangeability~\citep{barber2023beyond}, and online procedures control error rates over streams of decisions~\citep{javanmard2018online,bao2024cap,lu2026gaif}. In retrieval-augmented generation, conformal methods guarantee that the retrieved context covers the answer~\citep{rouzrokh2024conflare} or bound the risk of the generated text~\citep{kang2024crag}; both concern stores that hold something relevant, while our gate controls firing on stores that do not. Radar detection has long set thresholds from a local reference window to keep a constant false-alarm rate and has studied how contaminated references bias them~\citep{rohling1983cfar}; per-cue normalisation and the cue-swap reference set play these roles here. The cue-swap gate is a direct use of a conformal p-value, with the user's earlier messages as its reference set.

\section{Proactive recall as repeated testing}
\label{sec:setup}

At a turn the assistant sees a cue $c$ (the user's message, possibly with context) and a store of $N$ memories. A scoring rule gives each memory a score $s_i(c)$; a gate speaks if $\max_i s_i(c)$ exceeds a threshold and then brings up the arg-max. If the store contains no memory relevant to $c$, speaking is a \emph{false alarm}; if it contains a relevant memory, the turn is a \emph{hit} when the gate speaks and the arg-max is that memory.

\paragraph{Streaming replay.} We turn memory benchmarks into stores of controlled size. For each question, the \emph{null pool} consists of the user turns from sessions that contain no evidence for it; evidence turns are those labelled as containing the answer. A \emph{null trial} draws $N$ items from the null pool without replacement; a \emph{positive trial} draws $N-1$ such items and adds one evidence turn. The false-alarm rate is the firing rate on null trials and the hit rate is the proportion of positive trials that fire on the evidence turn. Gates that standardise scores also see a \emph{background}: up to $300$ turns of the same user's unfiltered history, evidence sessions included, drawn outside the store after the store is drawn. The background reads no labels; labels only define null and positive stores, and a background drawn from the labelled null pool is reported in Appendix~\ref{app:background}. Gates that need a threshold are tuned on a hash-defined half of the questions and evaluated on the other half; confidence intervals use a bootstrap over questions, or over conversations for LoCoMo.

\paragraph{Gates.} We compare (details in Appendix~\ref{app:gates}):
(i) a cosine threshold and (ii) a tf-idf threshold, each tuned once to $5\%$ false alarms at $N=8$;
(iii) \emph{recall-v1}, the gate of a deployed memory plugin, which standardises both channels per cue against a background sample and uses the threshold $\max(2,\sqrt{2\ln 2N})+0.5$;
(iv) the same $z$-score with its threshold frozen at $N=8$;
(v) an extreme-value gate that fits an exponential tail to each channel and applies a Bonferroni level $\alpha/2N$;
(vi) oracle versions of the cosine and $z$-score gates, re-tuned with labels at every $N$; and
(vii) the cue-swap conformal gate of \S\ref{sec:swap}.
We also run LLM judges that read the five memories with the highest cosine and either name one to bring up or stay silent.

\section{Theory: the price of scale}
\label{sec:theory}

Fix a cue. Let $X_1,\dots,X_N$ be the scores of irrelevant memories and $Y$ the score of a relevant one. We assume:
\begin{enumerate}[label=(A\arabic*),leftmargin=3em]
\item $X_1,\dots,X_N$ are i.i.d.\ with distribution function $F$, continuous and strictly increasing on its support;
\item $Y$ is independent of the $X_i$ and has a continuous distribution function $G$;
\item the gate speaks when the largest score in the store is at least $t$ and brings up the arg-max.
\end{enumerate}
Write $\Fbar=1-F$, $\Gbar=1-G$, and define the single-comparison ROC curve $R(p)=\Gbar(\Fbar^{-1}(p))$, the probability that the relevant memory clears the threshold at which one irrelevant memory clears it with probability $p$. Let $U_i=\Fbar(X_i)$ and $U=\Fbar(Y)$. Then the $U_i$ are i.i.d.\ uniform on $(0,1)$, independent of $U$, and $\Prob(U\le u)=R(u)$; we extend $R$ to the distribution function of $U$ on $\mathbb R$, which may put mass at $0$ (Lemma~\ref{lem:transform}).

\begin{proposition}[Sandwich]\label{prop:sandwich}
Choose $t_N$ with $F(t_N)^N=1-\alpha$, so that the null false-alarm rate is $\alpha$, and let $p_N=\Fbar(t_N)=1-(1-\alpha)^{1/N}$. A store with $N-1$ irrelevant memories and one relevant memory gives
\[
\Hit_N=\int_{[0,p_N]}(1-u)^{N-1}\,dR(u),\qquad
(1-\alpha)^{\frac{N-1}{N}}R(p_N)\le\Hit_N\le R(p_N),
\]
and $\alpha/N\le p_N\le\lambda/N$ with $\lambda=-\ln(1-\alpha)$.
\end{proposition}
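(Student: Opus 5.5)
The plan is to pass to the uniform scale through Lemma~\ref{lem:transform} and compute the hit probability by conditioning on the relevant memory's transformed score. Since $\Fbar$ is continuous and strictly decreasing on the support, $X_i\ge t_N$ is equivalent to $U_i\le p_N$. Ties have probability zero because the $X_i$ are continuous and independent of $Y$. A hit therefore means that the relevant memory is the arg-max and clears the threshold, which is the event $\{Y\ge t_N\}\cap\{X_i<Y\ \forall i\le N-1\}$. On the uniform scale this becomes $\{U\le p_N\}\cap\{U_i>U\ \forall i\}$.

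Conditioning on $U=u$ and using independence of $U$ from the $U_i$ gives conditional probability $(1-u)^{N-1}\mathbf 1\{u\le p_N\}$. Integrating against the law of $U$, whose distribution function is $R$ (possibly with an atom at $0$, which is why the domain is $[0,p_N]$ rather than $(0,p_N]$), yields the integral formula. The main point to check is the equivalence $Y\ge t_N\iff U\le p_N$ when $G$ has mass outside the support of $F$. Above the support, $U=0$, which corresponds to the atom and is handled by including $0$. Below the support, $U=1$, which exceeds $p_N$ in any case. Strict versus non-strict inequalities only matter on null sets, since $G$ is continuous and $F$ is strictly increasing.

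The sandwich is then immediate. On $[0,p_N]$ the integrand satisfies $(1-p_N)^{N-1}\le(1-u)^{N-1}\le1$, so
\[
(1-p_N)^{N-1}R(p_N)\le \Hit_N\le R(p_N).
\]
Finally $(1-p_N)^{N-1}=\bigl((1-\alpha)^{1/N}\bigr)^{N-1}=(1-\alpha)^{(N-1)/N}$. The threshold $t_N$ exists because $F$ is continuous and strictly increasing, so $F(t)^N$ sweeps $(0,1)$.

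For the bounds on $p_N$, write $p_N=1-e^{-\lambda/N}$. The upper bound $p_N\le\lambda/N$ follows from $1-e^{-x}\le x$. For the lower bound, Bernoulli's inequality gives $(1-\alpha/N)^N\ge1-\alpha$, hence $(1-\alpha)^{1/N}\le1-\alpha/N$, i.e.\ $p_N\ge\alpha/N$. No step is a real obstacle; the only delicate part is the measure-theoretic bookkeeping at the atom of $R$ at $0$ and the ties, both dispatched by continuity.
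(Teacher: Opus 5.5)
Your proposal is correct and follows the paper's proof essentially step for step. Both reduce to the uniform scale via Lemma~\ref{lem:transform} and condition on $U=u$ to obtain $(1-u)^{N-1}$, integrated against $dR$ including the atom at $0$. Both then bound the integrand on $[0,p_N]$ and use Bernoulli's inequality together with $1-e^{-x}\le x$ for the two bounds on $p_N$.
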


The gate is forced to work at a per-item false-positive rate of about $\lambda/N$, so its hit rate is read off the far left end of the ROC curve, and the end moves further left as the store grows. The proof (Appendix~\ref{app:proofs}) is a direct computation in the uniform scale.

\begin{theorem}[Power-law decay]\label{thm:power}
If $R(p)=p^\kappa\ell(p)$ with $\kappa>0$ and $\ell$ slowly varying at $0$, then
\[
\lim_{N\to\infty}\frac{\Hit_N}{R(p_N)}=C(\alpha,\kappa):=\int_0^1\kappa v^{\kappa-1}(1-\alpha)^v\,dv\in[1-\alpha,1],
\qquad
\Hit_N\sim C(\alpha,\kappa)\,\lambda^\kappa N^{-\kappa}\ell(1/N).
\]
\end{theorem}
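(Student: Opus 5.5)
Our starting point is the exact integral of Proposition~\ref{prop:sandwich}, $\Hit_N=\int_{[0,p_N]}(1-u)^{N-1}\,dR(u)$. Regular variation with $\kappa>0$ forces $R(0)=\lim_{p\downarrow0}p^\kappa\ell(p)=0$, so there is no atom at $0$. We substitute $u=p_N v$ with $v\in[0,1]$. Writing $R_N(v)=R(p_N v)/R(p_N)$, a distribution function on $[0,1]$, this gives
\[
\frac{\Hit_N}{R(p_N)}=\int_{[0,1]}(1-p_N v)^{N-1}\,dR_N(v).
\]
We then handle the two factors of the integrand separately.

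\textbf{The measure.} For fixed $v\in(0,1]$, slow variation of $\ell$ at $0$ and $p_N\to0$ give
\[
R_N(v)=v^\kappa\,\frac{\ell(p_N v)}{\ell(p_N)}\to v^\kappa .
\]
The limit $v^\kappa$ is continuous, so the measures $dR_N$ converge weakly to $\kappa v^{\kappa-1}\,dv$ on $[0,1]$. By Pólya's theorem the convergence of distribution functions is in fact uniform in $v$.

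\textbf{The integrand.} We have
\[
(1-p_N v)^{N-1}=\exp\bigl((N-1)\ln(1-p_N v)\bigr).
\]
Proposition~\ref{prop:sandwich} gives $Np_N\to\lambda$, since $p_N=1-e^{-\lambda/N}$. Hence $(N-1)\ln(1-p_Nv)\to-\lambda v$ uniformly on $[0,1]$, and
\[
h_N(v):=(1-p_Nv)^{N-1}\to e^{-\lambda v}=(1-\alpha)^v
\]
uniformly on $[0,1]$.

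\textbf{Combining the two.} The main step is to pass to the limit with both the integrand and the measure varying in $N$. We split the error as
\[
\Bigl|\int h_N\,dR_N-\int h\,dR_N\Bigr|+\Bigl|\int h\,dR_N-\int h\,d\mu\Bigr|,
\qquad h(v)=(1-\alpha)^v,\quad d\mu=\kappa v^{\kappa-1}\,dv .
\]
The first term is at most $\sup_{[0,1]}|h_N-h|\to0$, because each $R_N$ is a probability measure on $[0,1]$. The second term tends to $0$ by weak convergence, since $h$ is bounded and continuous. Alternatively, we can integrate by parts: because $h$ is smooth,
\[
\int h\,dR_N=h(1)-\int_0^1 R_N(v)h'(v)\,dv,
\]
and dominated convergence applies using $0\le R_N\le1$.

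This yields
\[
\frac{\Hit_N}{R(p_N)}\to C(\alpha,\kappa)=\int_0^1\kappa v^{\kappa-1}(1-\alpha)^v\,dv .
\]
The bounds $C\in[1-\alpha,1]$ are immediate, because $(1-\alpha)^v\in[1-\alpha,1]$ and $\kappa v^{\kappa-1}\,dv$ is a probability measure on $[0,1]$. They are also consistent with the sandwich of Proposition~\ref{prop:sandwich}.

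\textbf{The asymptotic form.} It remains to show $R(p_N)\sim\lambda^\kappa N^{-\kappa}\ell(1/N)$. First, $p_N^\kappa\sim(\lambda/N)^\kappa$ because $Np_N\to\lambda$. Second, $\ell(p_N)/\ell(1/N)\to1$. To see this, write $p_N=c_N/N$ with $c_N\to\lambda\in(0,\infty)$. The uniform convergence theorem for slowly varying functions, applied on the compact $c$-range $[\lambda/2,2\lambda]$, gives $\ell(c_N x)/\ell(x)\to1$ as $x\downarrow0$. Taking $x=1/N$ gives the claim.

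This citation of uniform convergence, together with the uniform-in-$v$ convergence in the combining step, is the only non-routine ingredient. Everything else is a change of variables followed by elementary limits.
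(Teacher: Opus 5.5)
Your proposal is correct and follows the paper's proof essentially step for step. The common steps are the substitution $u=p_Nv$, weak convergence of $R(p_Nv)/R(p_N)$ to $\kappa v^{\kappa-1}\,dv$, uniform convergence of $(1-p_Nv)^{N-1}$ to $(1-\alpha)^v$, the same two-term error split, and the uniform convergence theorem giving $\ell(p_N)/\ell(1/N)\to1$. The appeal to P\'olya's theorem and the integration-by-parts alternative are harmless extras, and $Np_N\to\lambda$ comes from the explicit formula $p_N=1-e^{-\lambda/N}$, not from the sandwich bounds of Proposition~\ref{prop:sandwich} themselves.
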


\begin{example}[Binormal scores]\label{ex:binormal}
If $X\sim\mathcal N(0,1)$ and $Y\sim\mathcal N(d,s^2)$ with $d,s>0$, then $\kappa=1/s^2$ and $\ln\ell(p)=(d/s^2)\,t+(1/s^2-1)\ln t+c+o(1)$ with $t=\Phi^{-1}(1-p)$; $\ell$ is slowly varying, yet grows faster than any power of $\ln(1/p)$. The local exponent of the ROC curve is $\kappa_{\mathrm{eff}}(p)=d\ln R/d\ln p=h((t-d)/s)/(s\,h(t))$, where $h=\phi/\bar\Phi$, and $\kappa_{\mathrm{eff}}(p)=s^{-2}(1-d/t)+O(t^{-2})$ as $p\downarrow0$ with $d,s$ fixed. It therefore approaches $1/s^2$ from below for small enough $p$. The local exponent of $R$ is not the log--log slope of $\Hit_N$ in $N$.
\end{example}

\begin{corollary}[Capacity]\label{cor:capacity}
In the binormal model let $t_\beta=d-s\Phi^{-1}(\beta)$, $q_\beta=\bar\Phi(t_\beta)$ and $N^*(\beta)=\ln(1-\alpha)/\ln(1-q_\beta)$. Then $\Hit_N<\beta$ for every $N>N^*(\beta)$; if $\beta'=\beta/(1-\alpha)<1$, then $\Hit_N\ge\beta$ for every $N\le N^*(\beta')$. For $t_\beta>0$,
\[
\ln N^*=\ln\lambda+\tfrac12 t_\beta^2+\ln t_\beta+\tfrac12\ln 2\pi+\eta,\qquad \eta\in\big[\ln(1-q_\beta),\,\ln(1+t_\beta^{-2})\big].
\]
\end{corollary}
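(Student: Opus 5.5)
The plan is to reduce everything to the single identity $R(q_\beta)=\beta$, so that Proposition~\ref{prop:sandwich} does the rest, and then to expand $\ln N^*$ using Gaussian tail bounds.

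\textbf{Step 1: locating the ROC level $\beta$.} In the binormal model of Example~\ref{ex:binormal}, $\Fbar=\bar\Phi$ and $\Gbar(x)=\bar\Phi((x-d)/s)$. Hence
\[
R(p)=\bar\Phi\big((\bar\Phi^{-1}(p)-d)/s\big),
\]
which is continuous and strictly increasing on $(0,1)$. At $p=q_\beta=\bar\Phi(t_\beta)$ we have $\bar\Phi^{-1}(q_\beta)=t_\beta$ and $(t_\beta-d)/s=-\Phi^{-1}(\beta)$. Therefore $R(q_\beta)=\bar\Phi(-\Phi^{-1}(\beta))=\beta$.

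\textbf{Step 2: comparing $p_N$ with $q_\beta$.} Recall $p_N=1-(1-\alpha)^{1/N}$. The inequality $p_N<q_\beta$ is equivalent to $\ln(1-\alpha)/N>\ln(1-q_\beta)$. Both logarithms are negative, so this is equivalent to $N>\ln(1-\alpha)/\ln(1-q_\beta)=N^*(\beta)$.

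\textbf{Step 3: the two capacity claims.} For $N>N^*(\beta)$, Step 2 gives $p_N<q_\beta$. Strict monotonicity of $R$ and the upper half of Proposition~\ref{prop:sandwich} then give
\[
\Hit_N\le R(p_N)<R(q_\beta)=\beta .
\]
For the second claim, suppose $\beta'<1$ and $N\le N^*(\beta')$. Step 2 applied to $\beta'$ gives $p_N\ge q_{\beta'}$. The lower half of the sandwich, together with $(1-\alpha)^{(N-1)/N}\ge 1-\alpha$, gives
\[
\Hit_N\ge(1-\alpha)\,R(q_{\beta'})=(1-\alpha)\beta'=\beta .
\]

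\textbf{Step 4: the expansion of $\ln N^*$.} This is the only step needing care. Write $q=q_\beta$, and note $\ln N^*=\ln\lambda-\ln(-\ln(1-q))$ with $\lambda=-\ln(1-\alpha)$. I would sandwich the second term in two stages.

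First, the elementary bounds $q\le-\ln(1-q)\le q/(1-q)$ give
\[
-\ln(-\ln(1-q))\in[\,-\ln q+\ln(1-q),\;-\ln q\,].
\]

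Second, for $t=t_\beta>0$, the Mills-ratio bounds
\[
\frac{t}{1+t^2}\,\phi(t)\le\bar\Phi(t)\le\frac{\phi(t)}{t}
\]
give
\[
-\ln q\in\big[\tfrac12t^2+\ln t+\tfrac12\ln2\pi,\;\tfrac12t^2+\ln t+\tfrac12\ln2\pi+\ln(1+t^{-2})\big].
\]

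Adding the two ranges shows that $\eta$ lies between $\ln(1-q)$ (the lower end of the first range plus the lower end of the second) and $\ln(1+t^{-2})$ (the upper ends). This is exactly the stated interval.

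The main thing to get right is the orientation of the inequalities: the sign flips when dividing by negative logarithms in Step 2, and the pairing of the two one-sided bounds in Step 4. The argument itself is routine once $R(q_\beta)=\beta$ is observed.
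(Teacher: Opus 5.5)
Your proposal is correct and follows the paper's proof essentially step for step: $R(q_\beta)=\beta$, then $p_N<q_\beta\iff N>N^*(\beta)$ combined with the two halves of the sandwich, then the expansion of $\ln N^*$ via $q\le-\ln(1-q)\le q/(1-q)$ and the Mills-ratio bounds. The only cosmetic difference is that you derive the comparison of $p_N$ with $q_\beta$ by manipulating the logarithms directly, where the paper cites strict monotonicity of $p_N$ in $N$ with $p_{N^*}=q_\beta$.
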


Writing $N_{\mathrm{cap}}(\beta)$ for the largest $N$ with $\Hit_N\ge\beta$, if there is one, the corollary gives $\lfloor N^*(\beta/(1-\alpha))\rfloor\le N_{\mathrm{cap}}(\beta)\le\lfloor N^*(\beta)\rfloor$; $N^*$ is an exact formula for the upper bound, and the capacity itself lies between the two. As $t_\beta\to\infty$, $N^*(\beta)\sim\lambda\sqrt{2\pi}\,t_\beta e^{t_\beta^2/2}$, so on the leading exponential scale a modest gain in separation multiplies the bound.

Experiments use one threshold for all cues, and cues differ. Let $\alpha_c(\tau)$ be the false-alarm rate of cue $c$ at a global threshold $\tau$, with $\E_c\alpha_c=\alpha$, and $\Gbar_c(\tau)=\Prob(Y_c\ge\tau)$ its pass rate, which we define with $\ge$ because scores on a finite history are discrete.

\begin{corollary}[Heterogeneous cues]\label{cor:mixture}
$\E_c\Hit_c\le\E_c\Gbar_c(\tau)$ and
\[
\E_c\Hit_c\ \ge\ \E_c[(1-\alpha_c)\Gbar_c(\tau)]=(1-\alpha)\,\E_c\Gbar_c(\tau)-\mathrm{Cov}_c\big(\alpha_c,\Gbar_c(\tau)\big).
\]
Both bounds also hold when, for each cue, null and positive stores draw their irrelevant items as simple random samples without replacement from the same pool of $M\ge N$ irrelevant memories, with the evidence item independent of them.
\end{corollary}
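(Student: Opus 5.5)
The plan is to prove the bounds cue by cue, conditioning on the cue, and then average over cues. Fix a cue $c$ and the global threshold $\tau$. On a positive store the gate scores a hit exactly when the evidence score satisfies $Y_c\ge\tau$ and exceeds every irrelevant score in the store. Ties have probability zero under continuity and should be handled in the same way as the $\ge$ convention in $\Gbar_c$.

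\textbf{Upper bound.} A hit requires $Y_c\ge\tau$. Hence $\Hit_c\le\Prob(Y_c\ge\tau)=\Gbar_c(\tau)$, and taking $\E_c$ gives the upper bound.

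\textbf{Lower bound.} A sufficient condition for a hit is $Y_c\ge\tau$ together with all $N-1$ irrelevant scores lying below $\tau$. Under (A1)--(A2), independence gives
\[
\Hit_c\ \ge\ \Gbar_c(\tau)\,F_c(\tau)^{N-1}.
\]
The false-alarm rate is $\alpha_c=1-F_c(\tau)^N$. Since $F_c(\tau)\le1$, we have $F_c(\tau)^{N-1}\ge F_c(\tau)^N=1-\alpha_c$. Therefore $\Hit_c\ge(1-\alpha_c)\Gbar_c(\tau)$.

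Taking expectations gives $\E_c\Hit_c\ge\E_c[(1-\alpha_c)\Gbar_c(\tau)]$. The stated identity is the definition of covariance,
\[
\E_c[\alpha_c\Gbar_c]=\E_c\alpha_c\,\E_c\Gbar_c+\mathrm{Cov}_c(\alpha_c,\Gbar_c),
\]
combined with $\E_c\alpha_c=\alpha$.

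\textbf{Sampling without replacement.} Here the only step that needs work is the inequality $\Prob(\text{all }N-1\text{ sampled items}<\tau)\ge 1-\alpha_c$. Let the pool of $M$ irrelevant items contain $k$ items with score $\ge\tau$. The null store draws $N$ items, so $1-\alpha_c=\binom{M-k}{N}/\binom{M}{N}$. The positive store draws $N-1$ items, which are all below $\tau$ with probability $\binom{M-k}{N-1}/\binom{M}{N-1}$.

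The ratio of these two probabilities is
\[
\frac{\binom{M-k}{N-1}\binom{M}{N}}{\binom{M-k}{N}\binom{M}{N-1}}=\frac{M-N+1}{M-k-N+1}\ \ge\ 1
\]
when $M-k\ge N$. When $M-k<N$, the null probability is $0$ and the inequality is trivial. Equivalently, a null store is a positive store's $N-1$ items plus one more item drawn without replacement, so the event for $N$ items is contained in the event for its first $N-1$ items. This coupling argument is cleaner and avoids the combinatorics.

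Because the evidence item is independent of the sample, the product bound $\Hit_c\ge\Gbar_c(\tau)(1-\alpha_c)$ holds as before. The upper bound is unchanged.

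I expect no real obstacle. The main care point is the treatment of ties and discreteness in the without-replacement case. There, "fires on the evidence" with ties at the maximum must still count as a hit when all others are strictly below $\tau$. This holds because a tie would require an irrelevant score $\ge\tau$, which the sufficient event excludes.
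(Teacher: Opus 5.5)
Your proof is correct and follows the paper's own argument. The hit event lies inside $\{Y_c\ge\tau\}$ and contains the event that $Y_c\ge\tau$ with every other item strictly below $\tau$; independence of the evidence item then factors the probability, and the comparison of $N-1$ against $N$ items gives $1-\alpha_c$ (your binomial ratio is the reciprocal of the paper's $(L-N+1)/(M-N+1)$), with the covariance identity finishing. The only small differences are that the paper writes $F_c(\tau-)$ so the i.i.d.\ case needs no continuity, and your coupling remark (the first $N-1$ items of a size-$N$ sample form a size-$(N-1)$ sample) covers both sampling schemes at once.
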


Cues whose baseline similarity is high, the hubs of~\citet{radovanovic2010hubs}, can raise both $\alpha_c$ and $\Gbar_c$, which makes the covariance positive, as it is at every $N$ in our data (\S\ref{sec:theory-check}). Whether per-cue normalisation recalls more than a global threshold is an empirical question, answered in \S\ref{sec:matched}.

The exact hit rate on a finite pool replaces $F^{N-1}$ by a hypergeometric factor. For a cue whose pool has $M$ irrelevant memories, $L(y)$ of them scoring strictly below $y$, and no ties between evidence and pool scores,
\[
\Hit_N(\tau)=\int_{[\tau,\infty)}\binom{L(y)}{N-1}\Big/\binom{M}{N-1}\,dG(y).
\]
We use this finite-pool form, averaged over cues, as the exact model in \S\ref{sec:theory-check}.

\subsection{A cue-swap conformal gate}
\label{sec:swap}

Per-cue control needs the distribution of the cue's best score on a store with nothing relevant, which a deployed assistant cannot observe. It can, however, put the user's earlier messages against the same store. Let $T_0$ be the best score of the current cue on the store $S$ and $T_1,\dots,T_K$ those of $K$ other cues, all computed by the same rule (in our experiments the per-cue $z$-score of recall-v1). The gate speaks when
\[
p=\frac{1+\#\{k\ge1:T_k\ge T_0\}}{K+1}\le\alpha .
\]

\begin{proposition}\label{prop:swap}
If $S$ contains nothing relevant to the current cue and $(T_0,\dots,T_K)$ is exchangeable, then $\Prob(p\le\alpha)\le\lfloor\alpha(K+1)\rfloor/(K+1)\le\alpha$. The bound also holds if there is an exchangeable null vector $(\tilde T_0,\dots,\tilde T_K)$ with $\tilde T_0=T_0$ and $T_k\ge\tilde T_k$ almost surely for $k\ge1$.
\end{proposition}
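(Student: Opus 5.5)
The plan is the standard conformal rank argument, with care for ties, followed by a monotonicity reduction for the second claim.

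\emph{Step 1 (deterministic counting).} Define the rank statistic $r_j=\#\{k\neq j: T_k\ge T_j\}$ for each $j\in\{0,\dots,K\}$. The gate fires exactly when $r_0\le m-1$, where $m=\lfloor\alpha(K+1)\rfloor$, since $p\le\alpha$ is equivalent to $1+r_0\le\alpha(K+1)$, and $1+r_0$ is an integer. I will show that for every realisation, at most $m$ indices $j$ satisfy $r_j\le m-1$.

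To see this, suppose a set $J$ of such indices had $|J|\ge m+1$. Take $j^\ast\in J$ with the smallest value $T_{j^\ast}$ among $J$. Every other $k\in J$ has $T_k\ge T_{j^\ast}$, so $r_{j^\ast}\ge |J|-1\ge m$, which is a contradiction. Ties cause no trouble, because the non-strict inequality $\ge$ only enlarges each rank.

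\emph{Step 2 (exchangeability).} The null hypothesis together with exchangeability gives $\Prob(r_0\le m-1)=\Prob(r_j\le m-1)$ for every $j$. This holds because $r_j$ is the same measurable function of the vector after swapping coordinates $0$ and $j$. Averaging over $j$ gives
\[
\Prob(p\le\alpha)=\frac{1}{K+1}\,\E\sum_{j=0}^K\mathbf 1\{r_j\le m-1\}\le\frac{m}{K+1}.
\]
This is $\lfloor\alpha(K+1)\rfloor/(K+1)\le\alpha$. If $m=0$ the event is empty and the bound holds trivially.

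\emph{Step 3 (dominated references).} Let $\tilde p$ be the p-value computed from $(\tilde T_0,\dots,\tilde T_K)$. Since $\tilde T_0=T_0$ and $T_k\ge\tilde T_k$, we have
\[
\{k:\tilde T_k\ge T_0\}\subseteq\{k:T_k\ge T_0\}\quad\text{almost surely},
\]
so $p\ge\tilde p$. Hence $\{p\le\alpha\}\subseteq\{\tilde p\le\alpha\}$, and Steps 1--2 applied to the exchangeable vector $\tilde T$ give the same bound.

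The only delicate point is handling ties correctly in Step 1, without randomisation. The minimal-element argument is the approach I will use for it. The remainder is bookkeeping.
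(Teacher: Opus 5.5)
Your proof is correct and follows the paper's argument essentially step for step. It uses the same three pieces: the deterministic fact that at most $m=\lfloor\alpha(K+1)\rfloor$ indices can satisfy $\#\{i\ne j:T_i\ge T_j\}\le m-1$, shown by the minimal-element argument; averaging the indicators over $j$ using exchangeability; and the pathwise inequality $p\ge\tilde p$ under the coupling for the second claim.
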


The second statement covers reference cues that score at least as high as exchangeable null cues would on the same draw. It needs this pathwise coupling: marginal stochastic dominance of each $T_k$ is not enough, because dependence between $T_0$ and the $T_k$ can still make the gate fire too often. Whether the coupling holds is an assumption about the data, and \S\ref{sec:swapexp} checks the resulting false-alarm rates. The guarantee is marginal over cues and stores. Exchangeability requires the reference cues to come from the same source as the current one, a condition tested in \S\ref{sec:swapexp}. Algorithm~\ref{alg:swap} gives the gate as it would run in an assistant.

\begin{figure}[!h]
\centering
\small
\begin{minipage}{0.95\linewidth}
\hrule height 0.8pt\vspace{3pt}
\refstepcounter{algo}\label{alg:swap}\textbf{Algorithm \thealgo}\quad Cue-swap gate, one decision. Defaults: $\alpha=0.05$, $K=\min(199,\text{earlier messages})$, background of up to $300$ items.
\vspace{3pt}\hrule\vspace{4pt}
\textbf{Input:} current cue $c$; store $S$ of $N$ memories; background $B$, up to $300$ items of the user's history outside $S$ and empty if $S$ holds all of it; reference cues $r_1,\dots,r_K$, the user's earlier messages.
\begin{enumerate}[label=\arabic*:,leftmargin=2.2em]
\item If $(K+1)\alpha<1$ (fewer than $19$ references at $\alpha=0.05$), stay silent.
\item For each cue $x\in\{c,r_1,\dots,r_K\}$, score $x$ against $S\cup B$ in the lexical and the semantic channel, standardise each channel by its median and MAD over $S\cup B$, and let $T(x)$ be the best standardised score over the eligible items of $S$ (Appendix~\ref{app:gates}). In deployment, score each $r_k$ without the session it came from, an asymmetry that Proposition~\ref{prop:swap} does not cover (Appendix~\ref{app:ablation}).
\item $p\leftarrow\big(1+\#\{k: T(r_k)\ge T(c)\}\big)/(K+1)$.
\item If $p\le\alpha$, bring up the memory that attains $T(c)$; otherwise stay silent.
\end{enumerate}
\vspace{2pt}
Median cost on one CPU thread, with the cue encoded, all scores recomputed and $B$ empty: about $0.1$\,s at $N=128$ ($K=128$) and $0.4$\,s at $N=2048$ ($K=199$; Appendix~\ref{app:ablation}).
\vspace{3pt}\hrule height 0.8pt
\end{minipage}
\end{figure}

\section{Experiments}
\label{sec:experiments}

\paragraph{Data.} LongMemEval~\citep{wu2025longmemeval} gives each question a history of about $50$ sessions in the S version and $500$ in the M version, with evidence turns marked; we use user turns as memories and drop abstention questions and the $51$ questions with no labelled evidence turn, leaving a null pool of about $230$ (S) or $2{,}400$ (M) turns per question and $210$ evaluation questions. LoCoMo~\citep{maharana2024locomo} has ten long conversations; memories are all turns, prefixed with the speaker and including image captions, and we drop adversarial questions, leaving $768$ evaluation questions. Store sizes double from $8$ to $2048$ (M) or $256$ (S, LoCoMo), with $10$ (M) or $20$ draws per question and size. A history in S has only about $240$ user turns, so its background is all of the history outside the store, and S is reported up to $N=128$ (Appendix~\ref{app:background}).

\paragraph{Scores.} The lexical channel is tf-idf cosine on word tokens; the semantic channel is the cosine of bge-small-en-v1.5 or bge-large-en-v1.5 embeddings~\citep{xiao2024cpack}. LLM judges are Qwen2.5-7B and 14B-Instruct (AWQ)~\citep{qwen2025qwen25} served with vLLM~\citep{kwon2023vllm}. \emph{Implicit cues} replace each LongMemEval question by a first-person message that a user might send during an everyday task where the memory would help, without asking about it or containing the answer; they were written by Qwen2.5-14B and filtered for leakage, keeping $385$ of $419$ questions (Appendix~\ref{app:details}).

\subsection{False alarms grow with the store}
\label{sec:fa}

\begin{figure}[t]
\centering
\includegraphics[width=\linewidth]{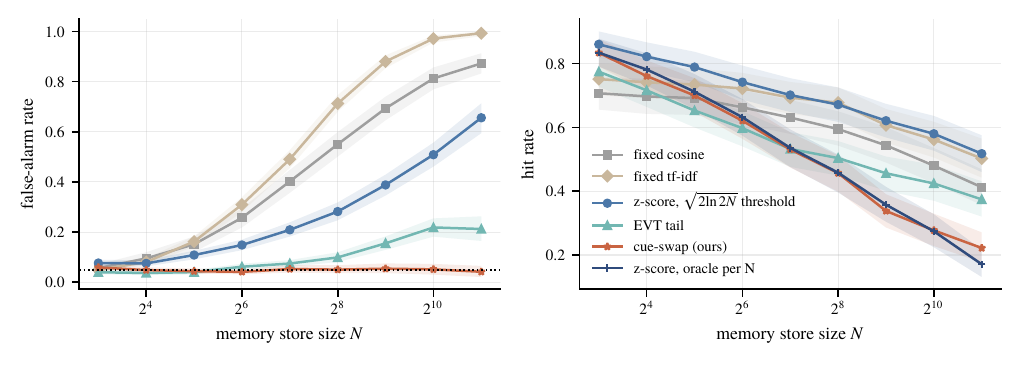}
\caption{False-alarm rate (left) and hit rate (right) against store size on LongMemEval-M with bge-small and explicit questions. The gates without per-size tuning start near $5\%$ (dotted line) at $N=8$. Shaded bands are $95\%$ bootstrap intervals over questions.}
\label{fig:scale}
\end{figure}

Figure~\ref{fig:scale} shows the main result. Tuned to $5\%$ at $N=8$, the cosine threshold reaches $87\%$ false alarms at $N=2048$ and the tf-idf threshold $99\%$; recall-v1, whose threshold grows like $\sqrt{2\ln 2N}$, reaches $66\%$, and the extreme-value gate $21\%$. The same pattern holds with bge-large ($87\%$, $100\%$, $66\%$, $22\%$), with implicit cues ($84\%$, $100\%$, $53\%$, $18\%$) and on LoCoMo up to $N=256$ ($36\%$, $54\%$, $54\%$), where the extreme-value gate stays at $2$--$4\%$. The hit rates of the untuned gates stay high only because they speak almost always.

\subsection{Two channels, two growth rates}
\label{sec:tails}

\begin{figure}[t]
\centering
\includegraphics[width=\linewidth]{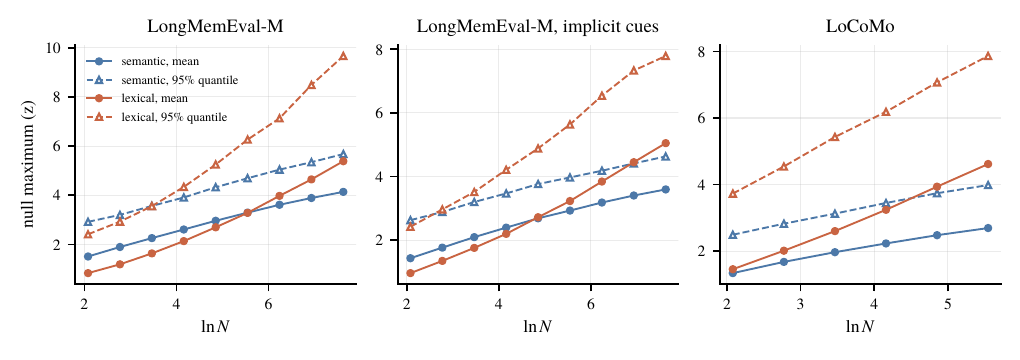}
\caption{Largest null $z$-score in a store against $\ln N$: mean (solid) and $95\%$ quantile (dashed). The lexical maximum grows $1.7$--$2.3$ times as fast as the semantic one at the mean and $2.6$--$2.8$ times at the $95\%$ quantile in the three settings shown.}
\label{fig:tails}
\end{figure}

Figure~\ref{fig:tails} explains why a $\sqrt{2\ln 2N}$ threshold fails. On LongMemEval-M the mean semantic maximum rises from $1.5$ to $4.1$ and is fitted better by $a+b\sqrt{\ln N}$ than by a line in $\ln N$ (residual sums of squares $0.011$ and $0.024$), while the mean lexical maximum rises from $0.8$ to $5.4$ and prefers the line ($0.21$ against $0.73$), as extreme-value theory predicts for Gaussian-type and exponential-type tails and as IR score models assign to non-relevant documents~\citep{manmatha2001modeling,arampatzis2009stop}. A threshold has to track an upper quantile, and at the $95\%$ quantile the lexical maximum rises from $2.4$ to $9.7$ and the semantic one from $2.9$ to $5.7$. The semantic quantile is fitted better by a line in $\ln N$ in some settings and by $\sqrt{\ln N}$ in others, including both implicit-cue settings, so this range of $N$ does not settle its form. The ratio is stable: between the smallest and the largest store the lexical quantile grows $2.4$ to $2.8$ times as much as the semantic one in every setting, with either embedding, either cue type and on both data sets (Appendix~\ref{app:growth}). At $N=2048$ the largest eligible lexical $z$-score alone exceeds the recall-v1 threshold in $62\%$ of null trials on the evaluation half and the semantic one in $22\%$.

\subsection{Theory against data}
\label{sec:theory-check}

\begin{figure}[t]
\centering
\includegraphics[width=\linewidth]{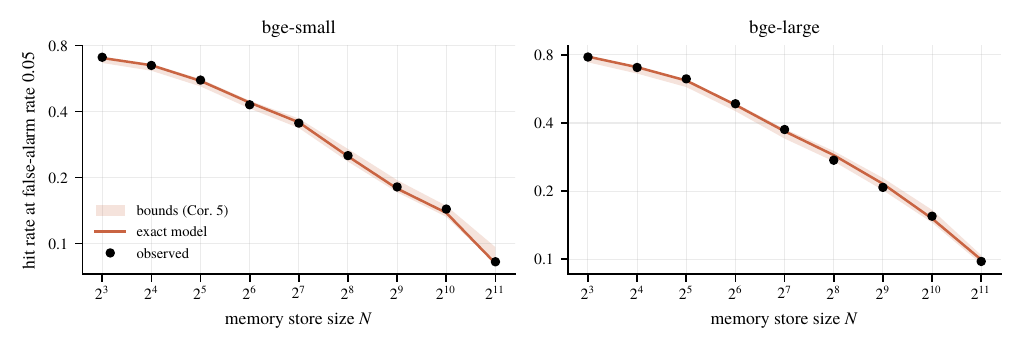}
\caption{Finite-pool consistency check on LongMemEval-M: hit rate of the per-$N$ label-tuned cosine threshold (dots) against the exact finite-pool model at the same threshold, computed per cue from the same pools (line), and the bounds of Corollary~\ref{cor:mixture} (band). Log--log axes.}
\label{fig:theory}
\end{figure}

\paragraph{Consistency.} For the cosine threshold tuned with labels to $5\%$ false alarms at each $N$, we compute the finite-pool hit rate (\S\ref{sec:theory}, after Corollary~\ref{cor:mixture}) from the score distributions of the evaluation cues and compare it with the replay at the same threshold. They agree to within $1.0$ percentage point for bge-small and $1.5$ for bge-large from $N=8$ to $2048$ (Figure~\ref{fig:theory}); the covariance term of Corollary~\ref{cor:mixture} is positive at every $N$ ($0.004$--$0.018$), and the observed hit rates stay within the mixture bounds up to $0.3$ points. Because the model and the replay use the same pools, this checks the finite-pool calculation and the sampling.

\paragraph{Transfer across questions.} With the threshold and the score distributions taken from the calibration half only, the finite-pool hit rate is the expected replay hit rate of that half, so comparing it with the evaluation half measures how well hit rates carry over to other questions. With bge-small it is $0.32$ $[0.27,0.38]$ against an observed $0.35$ $[0.31,0.41]$ at $N=128$, and $0.09$ $[0.06,0.13]$ against $0.08$ $[0.05,0.12]$ at $N=2048$; the intervals overlap at every $N$, the gap is at most $4.9$ points (at $N=16$), and bge-large behaves alike (Appendix~\ref{app:heldout}). Extrapolating across store sizes is harder. A power law fitted to the observed hit rates at $N=32$--$128$ has exponent $0.33$ and predicts $0.14$ at $N=2048$, where $0.08$ is observed: the effective exponent grows with $N$, so small-store measurements overstate large-store recall. For $N\ge128$ the per-item rate $p_N$ falls below the resolution $1/M$ of a pool of $2{,}400$ scores, so these pools cannot identify the asymptotic tail; on simulated scores with known tails the ratio of the exact hit rate to the limit of Theorem~\ref{thm:power} is $1.0005$ at $N=8$ and at most $1.0001$ from $N=64$ on, and the binormal local exponent approaches $1/s^2$ slowly (Appendix~\ref{app:heldout}).

Once false alarms are controlled, the cosine threshold keeps a hit rate of $71\%$ at $N=8$ and $8\%$ at $N=2048$.

\subsection{What the score buys}
\label{sec:matched}

\begin{table}[t]
\centering
\small
\caption{Hit rate at exactly $5\%$ false alarms (threshold set on the evaluation null trials, ties randomised), LongMemEval-M. Columns add one component at a time: per-cue normalisation of the cosine, the eligibility rules, and the normalised lexical channel (= recall-v1 score).}
\label{tab:matched}
\begin{tabular}{llccccc}
\toprule
cues, embedding & $N$ & raw cosine & semantic $z$ & + eligibility & + lexical $z$ & raw tf-idf \\
\midrule
\multirow{3}{*}{explicit, bge-small} & 8 & .70 & .74 & .74 & .83 & .75 \\
 & 128 & .32 & .38 & .39 & .52 & .50 \\
 & 2048 & .07 & .16 & .16 & .24 & .24 \\
\midrule
\multirow{3}{*}{explicit, bge-large} & 8 & .79 & .84 & .84 & .86 & .75 \\
 & 128 & .34 & .45 & .44 & .58 & .51 \\
 & 2048 & .10 & .17 & .17 & .24 & .21 \\
\midrule
\multirow{3}{*}{implicit, bge-small} & 8 & .36 & .46 & .46 & .63 & .55 \\
 & 128 & .13 & .14 & .14 & .26 & .24 \\
 & 2048 & .05 & .02 & .02 & .06 & .06 \\
\bottomrule
\end{tabular}
\end{table}

Table~\ref{tab:matched} compares scorers at exactly matched false alarms. With explicit questions and bge-small, normalising the cosine per cue adds $4.5$ $[0.6,9.0]$ points at $N=8$, $6.3$ $[-0.4,14.2]$ at $N=128$ and $8.8$ $[-1.0,13.3]$ at $N=2048$; the eligibility rules add nothing; and the normalised lexical channel adds a further $9.8$ $[5.6,13.3]$, $13.7$ $[6.9,22.9]$ and $8.2$ $[-0.2,19.5]$ points (paired by question, with the matched thresholds re-estimated on every bootstrap replicate). Together the deployed score recalls $14$--$26$ points more than raw cosine, with intervals above zero at every $N$. Raw tf-idf alone does about as well as the full score, because benchmark questions share words with their evidence. With implicit cues normalisation helps only on small stores ($9.7$ points at $N=8$, none from $N=1024$), and the lexical channel carries the advantage, since the rewrites keep entity names. On LoCoMo the full score beats raw cosine by $8$--$12$ points with intervals clustered by conversation, $4$--$10$ of them from per-cue normalisation; the lexical channel adds a further $2$--$5$, with an interval above zero only at $N=128$ ($4.8$ $[1.0,9.5]$). Controlled false alarms cost much of what ranking achieves: with explicit questions at $N=2048$ cosine puts the evidence first in $44\%$ of positive stores, yet no scorer exceeds $24\%$ at matched false alarms, because a best score has to stand out against other cues' best scores as well as the store. With implicit cues at $N=2048$ no scorer reaches $7\%$.

\subsection{The cue-swap gate}
\label{sec:swapexp}

\begin{table}[t]
\centering
\small
\caption{Hit rate (false-alarm rate in parentheses) on LongMemEval-M with the unfiltered-history background. The oracle gates are re-tuned with labels at every $N$ on the calibration half; the cue-swap gate uses $K=199$ other questions and $\alpha=0.05$, with no tuning.}
\label{tab:swap}
\begin{tabular}{llccc}
\toprule
cues, embedding & $N$ & cosine, oracle & $z$-score, oracle & cue-swap \\
\midrule
\multirow{3}{*}{explicit, bge-small} & 8 & .71 (.06) & .83 (.05) & .83 (.06) \\
 & 128 & .35 (.06) & .54 (.05) & .53 (.05) \\
 & 2048 & .08 (.07) & .17 (.02) & .22 (.04) \\
\midrule
\multirow{3}{*}{explicit, bge-large} & 8 & .78 (.05) & .86 (.05) & .86 (.04) \\
 & 128 & .37 (.06) & .56 (.05) & .54 (.05) \\
 & 2048 & .10 (.05) & .18 (.03) & .23 (.05) \\
\midrule
\multirow{3}{*}{implicit, bge-small} & 8 & .41 (.07) & .61 (.04) & .60 (.04) \\
 & 128 & .16 (.06) & .28 (.06) & .27 (.05) \\
 & 2048 & .05 (.07) & .06 (.04) & .05 (.03) \\
\midrule
\multirow{3}{*}{implicit, bge-large} & 8 & .52 (.06) & .63 (.04) & .63 (.04) \\
 & 128 & .21 (.08) & .27 (.05) & .28 (.05) \\
 & 2048 & .07 (.08) & .07 (.03) & .05 (.03) \\
\bottomrule
\end{tabular}
\end{table}

With a background drawn from the user's unfiltered history, the cue-swap gate keeps false alarms between $2.6\%$ and $5.9\%$ at every $N$ in the four LongMemEval settings; no pointwise $95\%$ interval lies above $5\%$, and the largest simultaneous upper limit over $N$ is $8.8\%$. Its hit rate is within a few points of the label-tuned $z$-score threshold (Table~\ref{tab:swap}). Replacing the labelled null-pool background by the unfiltered history changes the gate's false-alarm rate by at most $1.1$ points and its hit rate by at most $1.4$ points on LongMemEval-M, whereas the extreme-value gate, which fits an absolute tail to the background, turns conservative: its false-alarm rate at $N=2048$ falls from $30\%$ to $21\%$ on LongMemEval-M, and on S its hit rate at $N=8$ falls from $79\%$ to $51\%$ (Appendix~\ref{app:background}). The cue-swap gate compares the cue with reference cues on the same background, so a contaminated background shifts both sides.

Its reference cues are what can go wrong. LoCoMo questions name the speakers, and every memory starts with a speaker's name, so a cue shares names with its own conversation only. With reference cues from other conversations the swap scores are too low and false alarms are $8$--$10\%$; the conversation-clustered $95\%$ intervals exclude $5\%$ at every $N$. With reference cues from the same conversation, the setting a deployed assistant has, false alarms fall to $0.6$--$3.1\%$ and the gate loses power, because other questions of the conversation often have their own evidence in the store; its hit rate falls from $40\%$ at $N=8$ to $3.5\%$ at $N=256$, against $46\%$ to $13\%$ for the label-tuned threshold. On LoCoMo the extreme-value gate holds false alarms at $2$--$4\%$.

The gate has two parameters (Appendix~\ref{app:ablation}, separate runs with five draws per question and size). With $\alpha=0.01$ the largest false-alarm rate over $N$ is $1.3\%$, with $\alpha=0.05$ it is $5.1$--$6.5\%$ across $K\in\{19,\dots,399\}$, and with $\alpha=0.10$ it is $11$--$13\%$, with the highest values at $N=512$ once $K\ge99$, where the lower $95\%$ limits are about $10\%$. The p-value cannot fall below $1/(K+1)$, so the gate never speaks when $(K+1)\alpha<1$, as at $K=19$ and $49$ with $\alpha=0.01$. A larger $K$ raises the hit rate slightly up to $K\approx100$, and $\alpha$ trades false alarms for hits: at $N=2048$ with $K=199$ the hit rate is $8\%$, $22\%$ and $28\%$ for the three levels.

\subsection{Chronological replay}
\label{sec:temporal}

\begin{table}[t]
\centering
\small
\caption{Chronological replay on LongMemEval-M (bge-small, evaluation half). Top: share of session-opening turns on which each gate speaks, with the store holding every earlier user turn ($630$ cues per size); each cue opens one of the first three sessions at which the history reaches the target size, with the median actual size in parentheses. Bottom: hit rate of the benchmark question asked after the last session ($N\approx2{,}430$), for explicit questions and implicit cues. The fixed and transferred thresholds were tuned on random null stores from the calibration half, scored as here: store-only tf-idf weights, with the store as null sample.}
\label{tab:temporal}
\begin{tabular}{lcccccc}
\toprule
target $N$ & fixed cosine & fixed tf-idf & recall-v1 & extreme-value & $z$-score, transferred & cue-swap \\
\midrule
8 (16) & .25 & .17 & .12 & .00 & .03 & .01 \\
16 (23) & .31 & .19 & .13 & .00 & .03 & .06 \\
32 (39) & .38 & .18 & .16 & .01 & .04 & .06 \\
128 (135) & .64 & .32 & .23 & .02 & .04 & .08 \\
512 (519) & .73 & .52 & .39 & .13 & .04 & .06 \\
2048 (2055) & .87 & .78 & .57 & .33 & .06 & .07 \\
\midrule
hit, explicit & .38 & .45 & .45 & .34 & .19 & .20 \\
hit, implicit & .16 & .22 & .22 & .14 & .03 & .02 \\
\bottomrule
\end{tabular}
\end{table}

The replays so far draw stores at random and take reference cues from other questions. A deployed assistant sees its store grow in time and can compare a cue only with the user's own earlier messages. We therefore order each LongMemEval-M history by session date and let the store hold every user turn of the earlier sessions. Nothing reads a label: the null sample for standardisation is the store itself, and the cue-swap gate takes $K=\min(199,n)$ reference cues from the user's own $n$ earlier turns, each compared with the store without its own session. The cues are the opening turns of the first three sessions at which the store reaches each size. The tf-idf term weights come from the store alone. Every firing on these cues counts as an interruption, whether or not the memory it brings up would help.

The fixed and transferred thresholds were tuned on benchmark questions, so Table~\ref{tab:temporal} also tests how they carry over to opening turns. The fixed thresholds start above their nominal $5\%$ and grow as in the random replays: at $N=2048$ the cosine threshold speaks on $87\%$ of opening turns and the tf-idf threshold on $78\%$, recall-v1 on $57\%$ and the extreme-value gate on $33\%$. The transferred $z$-score threshold, which needs labels and a table of thresholds by size, stays at $3$--$6\%$. The cue-swap gate speaks on $6$--$8\%$ of opening turns at every target size from $N=16$ (median $N=23$) on, with pointwise $95\%$ intervals between $4.0\%$ and $10.0\%$; at $N=8$ most histories offer fewer than $19$ earlier turns, the minimum for $\alpha=0.05$, so it rarely speaks. Asked the benchmark question after the last session, it finds the evidence in $20\%$ of cases with explicit questions and $2\%$ with implicit cues, close to the $z$-score threshold tuned with labels on random null stores ($19\%$ and $3\%$). Asked right after the last evidence session (median $N=1{,}840$), it finds $25\%$ and $3\%$. recall-v1 finds twice as much at the price of speaking on $57\%$ of opening turns at $N=2048$, and the extreme-value gate finds $34\%$ and $14\%$, close to the fixed cosine threshold, while speaking on a third of them.

Opening turns and benchmark questions are different cues, so we also asked the benchmark questions on chronological stores with every answer session removed, at the session boundaries where the store first reaches $N=32$, $128$, $512$ and $2048$ and after the last session (Appendix~\ref{app:ablation}). The cue-swap gate fires on $5$--$8\%$ of explicit questions, with $95\%$ intervals between $2.4\%$ and $12.4\%$, and on $2$--$7\%$ of implicit cues; recall-v1 fires on $12$--$71\%$ and $10$--$60\%$. After the last session the gate has $20\%$ hits at $5\%$ false alarms with explicit questions.

\subsection{Several relevant memories}
\label{sec:sensitivity}

Benchmark stores hold one relevant memory, and we varied this on LongMemEval-M (Appendix~\ref{app:sensitivity}). With every evidence turn of the question in the store, hit rates rise because any of them can be brought up; the cue-swap gate recalls $29\%$ at $N=2048$ against $22\%$. A fact repeated three times leaves the gates that use the largest score unchanged, but the extreme-value gate loses up to $20$ points ($40\%$ against $60\%$ at $N=64$), because the copies thicken the upper tail it fits. With the whole evidence session in the store, the turns around the answer often outscore it: at $N=2048$ the cue-swap gate brings up the answer turn in $19\%$ of positive stores and some turn of the evidence session in $26\%$.

\subsection{A prospective-memory benchmark}
\label{sec:triggerbench}

TriggerBench~\citep{zhang2026triggerbench} plants a constraint in a conversation and ends it with a trigger turn that is harmless on its own but calls for a reminder given the constraint. We treat the user turns before the trigger as the store, the constraint turn as the evidence and the trigger as the cue, with $4$--$9$ user turns per conversation on average in the base split and up to about $640$ in the 40K split, and stores grown to $N=2048$ with filler user turns from the 40K split (Appendix~\ref{app:triggerbench}). The null condition removes the constraint turn; the negative condition is the benchmark's own control, where the constraint has been resolved or does not apply.

The constraint is rarely the memory most similar to the trigger, because the benchmark is built that way. Over the natural base stores it has the highest cosine in $71\%$ of logical-adherence items and $67\%$ of safe-coding items, but in $37\%$ of temporal-grounding and $13\%$ of state-tracking items, where the conversation around the constraint shares more with the trigger than the constraint does. At about $4\%$ false alarms in the null condition, the label-tuned $z$-score threshold recalls the constraint in $5$--$9\%$ of positive items across store sizes, and it fires on $14$--$17\%$ of the resolved negatives, more often than on the null stores. A similarity gate measures topical relevance and cannot see whether a constraint is still in force; this benchmark asks for reasoning about the constraint, which an LLM judge has to supply. The cue-swap gate with triggers of other conversations as references fires on $16$--$37\%$ of null stores, because the conversation turns share the topic of their own trigger and not of the references, as with reference cues from other LoCoMo conversations.

\subsection{LLM judges and a reranker}
\label{sec:llm}

\begin{figure}[t]
\centering
\includegraphics[width=\linewidth]{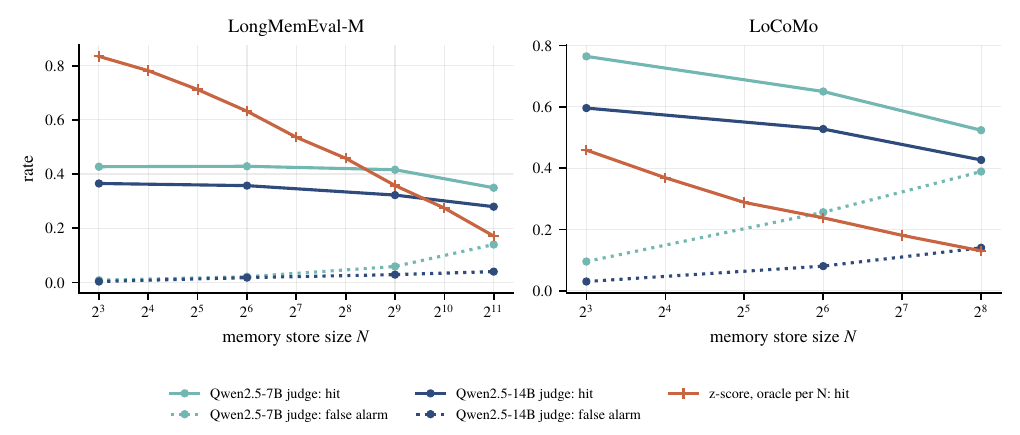}
\caption{LLM judges reading the five most similar memories. Solid lines are hit rates, dotted lines false-alarm rates; the orange line is the hit rate of the label-tuned $z$-score threshold at about $5\%$ false alarms.}
\label{fig:llm}
\end{figure}

A judge that reads the candidates can tell a near miss from a relevant memory, and Figure~\ref{fig:llm} shows that with explicit questions it leads on large stores. On LongMemEval-M at $N=2048$ the 14B judge has $28\%$ $[23,34]$ hits at $4\%$ false alarms, against $17$--$22\%$ for the best similarity gates, with $[17,27]$ for the cue-swap gate. It is conservative on small stores ($37\%$ hits at $N=8$ against $83\%$) and its false-alarm rate still grows with the store, from $0.3\%$ to $4\%$ on LongMemEval-M and from $3\%$ to $14\%$ on LoCoMo, roughly linearly in $\ln N$. The 7B judge has more hits and more false alarms ($14\%$ at $N=2048$; $39\%$ on LoCoMo at $N=256$). With implicit cues the 14B judge recalls $25\%$ at $N=8$ and $15\%$ at $N=2048$, while its false alarms climb from $0\%$ to $11\%$; there it pays the price of scale in false alarms as much as in hits. Retrieval becomes a second bottleneck. At $N=2048$ the evidence is among the top five in $65\%$ of positive trials with explicit questions and in $36\%$ with implicit cues.
A cross-encoder reranker (bge-reranker-base~\citep{xiao2024cpack}) reads the cue with each of the $20$ memories closest in cosine and speaks if the best reranker score clears a threshold. Tuned to $5\%$ false alarms at $N=8$, it fires on $81\%$ of null stores at $N=2048$. At exactly $5\%$ false alarms it recalls $80\%$, $57\%$, $31\%$ and $23\%$ at $N=8$, $64$, $512$ and $2048$ with explicit questions, close to the deployed $z$-score ($83\%$, $66\%$, $35\%$ and $24\%$), and $46\%$, $11\%$, $5\%$ and $2\%$ with implicit cues, well below it ($63\%$, $36\%$, $15\%$ and $6\%$). It ranks better than cosine, putting the evidence first in $58\%$ of positive stores at $N=2048$ against $42\%$ for cosine in the same run, and its threshold still has to climb with the null maximum.

\subsection{What false alarms are}
\label{sec:annotation}

The false-alarm rate counts every firing on a null store as an error, but a null store can contain memories that are useful without holding the labelled answer. We sampled $240$ recall-v1 false alarms on LongMemEval-M, $80$ each at $N=64$, $512$ and $2048$, and mixed in $30$ evidence turns. Three annotators labelled each item blind as \emph{helpful} (a response would use its content, including a partial answer), \emph{same topic but not useful}, or \emph{unrelated}: one of the authors, who read the English text with a machine translation alongside, and two LLMs, Claude and GPT (Appendix~\ref{app:annotation}). Agreement is computed on $229$ items: the $232$ labelled after a translation fix, minus three the human marked undecidable. Cohen's $\kappa$~\citep{cohen1960kappa} is $0.62$ between the human and Claude, $0.54$ between the human and GPT and $0.74$ between the two models; Fleiss' $\kappa$~\citep{fleiss1971kappa} is $0.63$, and on the $203$ false alarms among them it is $0.61$, or $0.47$ for helpful against not helpful. The majority marks $26$ of the $30$ evidence turns helpful.

Of the $206$ false alarms labelled after the fix, $3\%$, $5\%$ and $12\%$ at $N=64$, $512$ and $2048$ are helpful by majority vote, with Wilson $95\%$ intervals of $[1,10]$, $[2,13]$ and $[7,22]$. Where the human disagrees with both models the majority follows the models; the human alone, the most lenient annotator, puts the figure at $9$--$16\%$. Same-topic distractors are $45\%$ at $N=64$ and $53$--$67\%$ at larger $N$, as one expects if larger stores contain more near neighbours of any cue.

\section{Design implications}

For builders of long-lived assistants the results suggest four rules.
\begin{enumerate}[leftmargin=1.6em]
\item \textbf{Standardise each channel per cue and combine the channels.} At matched false alarms with explicit questions this recalled $14$--$26$ points more than raw cosine with bge-small and $7$--$24$ with bge-large. The eligibility rules of recall-v1 contributed nothing and can be dropped.
\item \textbf{Let the threshold grow with the store, separately per channel.} A threshold tuned once at $N=8$ reached $87$--$99\%$ false alarms at $N=2048$, and the lexical maximum grew about twice as fast as the semantic one, faster still in its upper tail.
\item \textbf{Calibrate against the user's own earlier messages with the cue-swap gate} (Algorithm~\ref{alg:swap}). It needs no labels and about $0.4$\,s per decision at $N=2048$, and at $\alpha=0.05$ it can speak once the user has sent $19$ earlier messages. In the chronological replay it spoke on $6$--$8\%$ of session openings, where recall-v1 spoke on $57\%$ at $N=2048$. Reference cues from other conversations made it overshoot on LoCoMo.
\item \textbf{Expect the usable store size to be limited by separation.} Do not extrapolate it from small stores, whose slopes overstate large-store recall, and expect to lose more of it to the comparison across cues than to ranking. In the binormal model the capacity bound of Corollary~\ref{cor:capacity} grows exponentially in the squared separation, so better representations, richer cues and LLM judges that read candidates are what raise it. A constraint that only reasoning connects to the cue, as in TriggerBench, is out of reach of a similarity gate and needs a judge that reads the conversation.
\end{enumerate}

\section{Limitations}
The replays use benchmark questions and model-written implicit cues in place of real users' messages; the implicit cues were written by a model that saw the answer, and string filtering cannot rule out semantic hints. LoCoMo has ten conversations, which appear in both halves, and intervals clustered on ten are rough. LongMemEval assembles every history from one pool of sessions written for several simulated users, so the calibration and evaluation halves share history text and our held-out tests hold out questions, not users; the pooling also produces collisions that a single person's memory would rarely contain (one history has a dog and a houseplant both named Lola). The main replays draw random stores with one relevant item, and our analysis assumes independent scores; the chronological replay and the variants with several relevant memories cover part of what this leaves out.

The theory covers gates that threshold the best score, not judges that decide jointly over candidates. The guarantee of the cue-swap gate is marginal and depends on exchangeability, which reference cues from another conversation or scenario violated on LoCoMo and TriggerBench. In the chronological replay the reference cues are the user's own earlier turns, each scored without its own session. Their best score is taken over fewer memories than the current cue's, so the guarantee does not cover this variant, and with independent scores and the session sizes of LongMemEval-M the simulated false-alarm rate rises by about one point at $N=16$ and by nothing measurable from $N=128$ (Appendix~\ref{app:ablation}). The firing rates on opening turns count useful recalls as interruptions, and the fixed thresholds of this replay were tuned on other cues.

Annotations come from one human, who is not a native English speaker and read machine translations, and two LLMs whose labels decide the majority when the human disagrees; the false alarms were sampled under the version-1 protocol, and the labels measure usefulness for a response, not whether a user would welcome the interruption. All experiments are in English.

\paragraph{Use of AI assistants.} Claude (Anthropic) and GPT (OpenAI) were used to write code, run analyses, help draft and edit the text and review the proofs, and served as the two LLM annotators of \S\ref{sec:annotation}. The authors checked the proofs and the reported numbers and take responsibility for the content.

\bibliographystyle{plainnat}
\bibliography{refs}

\appendix

\section{Proofs}
\label{app:proofs}

\begin{lemma}\label{lem:transform}
Under (A1)--(A2), $U_1,\dots,U_N$ are i.i.d.\ uniform on $(0,1)$ and independent of $U$; $\Prob(U\le u)=R(u)$ for $u\in(0,1)$, where $R$ is extended to the distribution function of $U$ on $\mathbb R$, with $R(u)=0$ for $u<0$ and $R(0)=\Prob(U=0)=R(0+)$; up to a null event, $\{Y\ge\Fbar^{-1}(u)\}=\{U\le u\}$ and $\{Y>\max_iX_i\}=\{U<\min_iU_i\}$.
\end{lemma}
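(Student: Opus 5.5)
The plan is the probability integral transform applied to $F$, followed by a check of the two event identities off null sets.

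\emph{Uniformity and independence.} Since $F$ is continuous (A1), $F(X_i)$ is uniform on $[0,1]$. Indeed, for $v\in(0,1)$ the set $\{x:F(x)\ge v\}$ is $[F^{-1}(v),\infty)$ with $F(F^{-1}(v))=v$ by continuity. Hence $\Prob(F(X_i)\ge v)=\Fbar(F^{-1}(v)^-)=1-v$, because $F$ has no atoms. So $U_i=\Fbar(X_i)=1-F(X_i)$ is uniform. The values $0$ and $1$ are attained only with probability $0$, so we may take the $U_i$ in $(0,1)$. Independence of $U_1,\dots,U_N,U$ is inherited from (A1)--(A2), since each is a measurable function of a different independent variable.

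\emph{Distribution of $U$.} For $u\in(0,1)$ let $t=\Fbar^{-1}(u)$. Here $\Fbar$ is continuous and strictly decreasing on the support of $F$, so $\Fbar^{-1}$ is well defined there.

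\begin{itemize}
\item The first identity is $\{\Fbar(Y)\le u\}=\{Y\ge t\}$ up to a null event. The two sets can differ only where $Y$ lies outside the support of $F$, or in a flat stretch of $\Fbar$ at level $u$.
\item By strict monotonicity of $F$ on its support, the only such flat stretches are the regions above $\sup\operatorname{supp}F$, where $\Fbar=0$, and below $\inf\operatorname{supp}F$, where $\Fbar=1$.
\item For $u\in(0,1)$ both of these regions lie entirely on one side of $t$. So the sets agree exactly, except possibly at $\{Y=t\}$, which is $G$-null by continuity of $G$.
\item Therefore $\Prob(U\le u)=\Gbar(t)=R(u)$, using $\Prob(Y\ge t)=\Prob(Y>t)$.
\end{itemize}

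For the extension, set $R(u)=0$ for $u<0$, since $U\ge0$, and let $R(0)=\Prob(U=0)$. By right-continuity of a distribution function, $R(0)=\lim_{u\downarrow0}\Prob(U\le u)=R(0+)$. This atom collects the mass of $Y$ at or above the top of the support of $F$, which is where $R$ may put mass at $0$.

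\emph{The maximum event.} The map $x\mapsto\Fbar(x)$ is nonincreasing, so $Y>\max_iX_i$ gives $U\le\min_iU_i$. Conversely, $U<\min_iU_i$ forces $Y>X_i$ for every $i$, since $\Fbar(Y)<\Fbar(X_i)$ is impossible when $Y\le X_i$. It remains to show that the discrepancies are null. These are the events where $Y>\max X_i$ but $U=\min U_i$, or where ties occur.

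\begin{itemize}
\item $\{U=U_i\}$ is null because $U_i$ is continuous and independent of $U$.
\item $\{Y=X_i\}$ is null by continuity and independence.
\end{itemize}

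The main obstacle, mild as it is, is this bookkeeping of null sets where $\Fbar$ is not injective. It is handled by noting that those sets are the tails outside the support of $F$ and that $X_i$ lands there with probability zero.
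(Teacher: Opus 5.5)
Your proof is correct and follows the paper's route: the probability integral transform for the $U_i$, the identity $\{\Fbar(Y)\le u\}=\{Y\ge\Fbar^{-1}(u)\}$ from strict monotonicity of $\Fbar$ on the support, and right-continuity for $R(0)=R(0+)$. That identity is in fact exact, since $\Fbar(t)=u$, so your hedge about $\{Y=t\}$ is unnecessary. The only difference is at the maximum event: you sandwich $\{U<\min_iU_i\}\subseteq\{Y>\max_iX_i\}\subseteq\{U\le\min_iU_i\}$ and discard the gap using $\Prob(U=U_i)=0$, whereas the paper splits cases on whether $Y$ lies in, above or below the support of $F$; your version avoids that case split and never needs $\max_iX_i$ to lie in the support.
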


\begin{proof}
Continuity of $F$ makes $\Fbar(X_i)$ uniform. By (A1), $\Fbar$ is continuous and strictly decreasing on the support, so $\{t:\Fbar(t)\le u\}=[\Fbar^{-1}(u),\infty)$ and $\Prob(U\le u)=\Prob(Y\ge\Fbar^{-1}(u))=\Gbar(\Fbar^{-1}(u))=R(u)$, using continuity of $G$. Right-continuity gives $R(0)=R(0+)$; $U$ has an atom at $0$ exactly when $Y$ exceeds the upper end of the support of $F$ with positive probability. For the last identity let $M=\max_iX_i$, which lies in the support almost surely. If $Y$ is in the support, $Y>M$ is equivalent to $\Fbar(Y)<\Fbar(M)=\min_iU_i$ by strict monotonicity. If $Y$ is above the support, $U=0<\min_iU_i$ and $Y>M$ both hold almost surely; if $Y$ is below it, $U=1>\min_iU_i$ and $Y<M$ both hold almost surely.
\end{proof}

\begin{proof}[Proof of Proposition~\ref{prop:sandwich}]
By Lemma~\ref{lem:transform} the hit event is $\{U\le p_N,\ U<\min_{i<N}U_i\}$ up to a null event. Given $U=u$ the second condition has probability $(1-u)^{N-1}$, and integrating against the law of $U$ gives the integral, read as a Lebesgue--Stieltjes integral that includes a possible atom at $0$. On $[0,p_N]$ the integrand lies between $(1-p_N)^{N-1}=F(t_N)^{N-1}=(1-\alpha)^{(N-1)/N}$ and $1$, which gives the bounds. Bernoulli's inequality gives $(1-\alpha)^{1/N}\le1-\alpha/N$, hence $p_N\ge\alpha/N$; and $(1-\alpha)^{1/N}=e^{-\lambda/N}\ge1-\lambda/N$ gives $p_N\le\lambda/N$.
\end{proof}

\begin{proof}[Proof of Theorem~\ref{thm:power}]
Substituting $u=p_Nv$ gives $\Hit_N/R(p_N)=\int_{[0,1]}f_N\,d\mu_N$ with $f_N(v)=(1-p_Nv)^{N-1}$ and $\mu_N(v)=R(p_Nv)/R(p_N)$, a distribution function on $[0,1]$ because $\kappa>0$ forces $R(0+)=0$. Regular variation gives $\mu_N(v)\to v^\kappa$ for every $v$, and the limit is continuous, so $\mu_N$ converges weakly to $\mu(dv)=\kappa v^{\kappa-1}dv$. Since $Np_N\to\lambda$ and $\ln f_N(v)=(N-1)\ln(1-p_Nv)=-(N-1)p_Nv+O(Np_N^2)$, $f_N\to f(v)=e^{-\lambda v}=(1-\alpha)^v$ uniformly on $[0,1]$. Then $|\int f_Nd\mu_N-\int fd\mu|\le\sup|f_N-f|+|\int fd\mu_N-\int fd\mu|\to0$, the second term by weak convergence and the boundedness and continuity of $f$. The integrand $(1-\alpha)^v$ lies in $[1-\alpha,1]$ and $\mu$ is a probability measure, so $C\in[1-\alpha,1]$. Finally $R(p_N)=p_N^\kappa\ell(p_N)$ with $p_N^\kappa\sim\lambda^\kappa N^{-\kappa}$, and $p_N/(1/N)\to\lambda\in(0,\infty)$ gives $\ell(p_N)/\ell(1/N)\to1$ by the uniform convergence theorem for slowly varying functions~\citep[Thm.~1.2.1]{bingham1987regular}.
\end{proof}

\begin{proof}[Details of Example~\ref{ex:binormal}]
By the Mills-ratio expansion, $\ln R=-(t-d)^2/(2s^2)-\ln((t-d)/s)-\frac12\ln2\pi+o(1)$ and $\ln p=-t^2/2-\ln t-\frac12\ln2\pi+o(1)$. Subtracting $s^{-2}\ln p$ gives $\ln\ell(p)=(d/s^2)t+(1/s^2-1)\ln t+c+o(1)$ with $c=-d^2/(2s^2)+\ln s+\frac12(1/s^2-1)\ln2\pi$. For fixed $\lambda'>0$ let $t_1=t(\lambda'p)$ and $t_2=t(p)$. Since $t^2\sim2\ln(1/p)$, $t_1/t_2\to1$; then $\ln\lambda'=-(t_1^2-t_2^2)/2-\ln(t_1/t_2)+o(1)$ gives $(t_1-t_2)(t_1+t_2)\to-2\ln\lambda'$, so $t_1-t_2=O(1/t)\to0$ and $\ell(\lambda'p)/\ell(p)\to1$. The local exponent follows from $d\ln R/dt=-\phi(z)/(s\bar\Phi(z))$ with $z=(t-d)/s$ and $d\ln p/dt=-\phi(t)/\bar\Phi(t)$, and, as $t\to\infty$ with $d,s$ fixed, $h(x)=x+1/x+O(x^{-3})$ gives $\kappa_{\mathrm{eff}}=s^{-2}(1-d/t)(1+O(t^{-2}))=s^{-2}(1-d/t)+O(t^{-2})$. The example in the text is computed from the exact expression.
\end{proof}

\begin{proof}[Proof of Corollary~\ref{cor:capacity}]
$R(p)=\bar\Phi((\bar\Phi^{-1}(p)-d)/s)$ is continuous and strictly increasing, $R(q_\beta)=\beta$, and $p_N$ is strictly decreasing in $N$ with $p_{N^*}=q_\beta$. If $N>N^*(\beta)$ then $p_N<q_\beta$ and $\Hit_N\le R(p_N)<\beta$; if $N\le N^*(\beta')$ then $R(p_N)\ge\beta'$ and $\Hit_N\ge(1-\alpha)R(p_N)\ge\beta$. For the expansion write $\ln N^*=\ln\lambda-\ln(-\ln(1-q_\beta))$. From $q\le-\ln(1-q)\le q/(1-q)$, $-\ln(-\ln(1-q))=-\ln q+\delta$ with $\delta\in[\ln(1-q),0]$. From the Mills-ratio bounds $t\phi(t)/(1+t^2)\le\bar\Phi(t)\le\phi(t)/t$ of~\citet{gordon1941mills}, $-\ln q=t^2/2+\ln t+\frac12\ln2\pi+\varepsilon$ with $\varepsilon\in[0,\ln(1+t^{-2})]$. Hence $\eta=\delta+\varepsilon$ lies in the stated interval.
\end{proof}

\begin{proof}[Proof of Corollary~\ref{cor:mixture}]
For each cue the hit event lies in $\{Y_c\ge\tau\}$, which gives the upper bound for every cue and hence for the average. It contains $\{Y_c\ge\tau\}\cap\{\text{all other items}<\tau\}$; the evidence item is drawn independently of the others, so $\Hit_c\ge\Gbar_c(\tau)\,\Prob(\max\text{ of }N-1\text{ items}<\tau)$. Under i.i.d.\ sampling the last probability is $F_c(\tau-)^{N-1}\ge F_c(\tau-)^N=1-\alpha_c$. Under sampling without replacement from a pool of $M$ items, $L$ of them below $\tau$, it is $\binom{L}{N-1}/\binom{M}{N-1}\ge\binom{L}{N}/\binom{M}{N}=1-\alpha_c$, because the ratio of the two sides is $(L-N+1)/(M-N+1)\le1$ when $L\ge N-1$, and both sides are zero otherwise. In both cases $\Hit_c\ge\Gbar_c(\tau)(1-\alpha_c)$. Averaging and writing $\E[\alpha_c\Gbar_c]=\E\alpha_c\,\E\Gbar_c+\mathrm{Cov}(\alpha_c,\Gbar_c)$ gives the identity. No continuity is needed because the event requires the other items to be strictly below $\tau$.
\end{proof}

\begin{proof}[Proof of Proposition~\ref{prop:swap}]
Define $p_j$ for each $j\in\{0,\dots,K\}$ by treating $T_j$ as the current score, and let $m=\lfloor\alpha(K+1)\rfloor$. The condition $p_j\le\alpha$ is $\#\{i\ne j:T_i\ge T_j\}\le m-1$. At most $m$ indices satisfy it: among $m+1$ such indices, the one with the smallest $T$ would have $m$ others at least as large. By exchangeability all $\Prob(p_j\le\alpha)$ are equal, so $\Prob(p_0\le\alpha)=(K+1)^{-1}\E\sum_j\mathbf 1\{p_j\le\alpha\}\le m/(K+1)$. For the second statement, $p(T)\ge p(\tilde T)$ pointwise under the coupling, because raising reference scores can only raise the count in $p$; hence $\Prob(p(T)\le\alpha)\le\Prob(p(\tilde T)\le\alpha)\le m/(K+1)$.
\end{proof}

\section{Gates and scores}
\label{app:gates}

\paragraph{Lexical channel.} Tokens are lower-cased latin words of at least two characters and numbers of at least three digits. A term with count $n$ in a text has weight $(1+\ln n)\ln\big(1+(D+1)/(\mathrm{df}+0.5)\big)$, where document frequencies are computed over all items of the data set; the score is the cosine of weight vectors, and we also record the number of shared terms.

\paragraph{recall-v1.} For a cue, the null sample is the store together with up to $300$ background items, drawn after the store from the turns of the same history that are not in the store: the user turns of the question's history in LongMemEval, sessions with labelled evidence included, and all turns of the conversation in LoCoMo. Each channel is standardised by the median and $1.4826$ times the median absolute deviation of the null sample, with spread floors $0.04$ (lexical) and $0.03$ (semantic). A memory is eligible in the lexical channel if its tf-idf cosine is at least $0.1$ with at least two shared terms, and in the semantic channel if its cosine is at least $0.55$. The gate speaks when the largest eligible $z$-score reaches $\max(2,\sqrt{2\ln 2N})+0.5$. These are the defaults of the deployed plugin; its activation noise, recency bonus and per-turn throttles are switched off.

\paragraph{Extreme-value gate.} For each channel we take the null sample above its $90$th percentile (at least ten points), fit an exponential tail to the excesses and compute a tail p-value for every memory; the gate speaks if the smallest p-value over memories and channels is at most $\alpha/(2N)$. A generalised Pareto fit by probability-weighted moments was too conservative at moderate $N$ and is reported only in the released results.

\paragraph{Cue-swap gate.} The reference cues are $K=199$ questions other than the current one (LongMemEval) or other questions of the same conversation, up to $199$ (LoCoMo, same-source variant). Their best $z$-scores are computed with the same background items and the same eligibility rules.

\paragraph{LLM judges.} The prompt shows the cue and the five memories with the highest cosine (truncated to $400$ characters) and asks for the number of the single note to bring up or \texttt{NONE}; decoding is greedy.

\section{Background drawn from the null pool}
\label{app:background}

\begin{table}[h]
\centering
\small
\caption{False-alarm rate / hit rate on the evaluation half with the background drawn from the user's unfiltered history (full, the protocol of the main text) or from the labelled null pool (pool). bge-small; LoCoMo with reference cues from other conversations.}
\label{tab:background}
\begin{tabular}{llcccc}
\toprule
setting & $N$ & background & recall-v1 & extreme-value & cue-swap \\
\midrule
\multirow{6}{*}{M, explicit} & \multirow{2}{*}{8} & pool & .07 / .85 & .05 / .81 & .05 / .83 \\
 & & full & .08 / .86 & .04 / .78 & .06 / .83 \\
 & \multirow{2}{*}{128} & pool & .20 / .69 & .10 / .57 & .05 / .53 \\
 & & full & .21 / .70 & .08 / .53 & .05 / .53 \\
 & \multirow{2}{*}{2048} & pool & .65 / .52 & .30 / .40 & .05 / .22 \\
 & & full & .66 / .52 & .21 / .37 & .04 / .22 \\
\midrule
\multirow{6}{*}{M, implicit} & \multirow{2}{*}{8} & pool & .04 / .62 & .04 / .58 & .04 / .61 \\
 & & full & .04 / .62 & .04 / .55 & .04 / .60 \\
 & \multirow{2}{*}{128} & pool & .16 / .38 & .07 / .29 & .05 / .25 \\
 & & full & .15 / .38 & .07 / .27 & .05 / .27 \\
 & \multirow{2}{*}{2048} & pool & .52 / .24 & .24 / .17 & .03 / .05 \\
 & & full & .53 / .23 & .18 / .14 & .03 / .05 \\
\midrule
\multirow{4}{*}{S, explicit} & \multirow{2}{*}{8} & pool & .06 / .85 & .04 / .79 & .05 / .82 \\
 & & full & .06 / .83 & .01 / .51 & .05 / .80 \\
 & \multirow{2}{*}{128} & pool & .19 / .70 & .08 / .54 & .05 / .55 \\
 & & full & .16 / .69 & .00 / .17 & .05 / .54 \\
\midrule
\multirow{4}{*}{LoCoMo} & \multirow{2}{*}{8} & pool & .12 / .59 & .04 / .49 & .09 / .55 \\
 & & full & .11 / .58 & .02 / .45 & .09 / .55 \\
 & \multirow{2}{*}{256} & pool & .55 / .35 & .07 / .18 & .08 / .18 \\
 & & full & .54 / .35 & .04 / .15 & .08 / .18 \\
\bottomrule
\end{tabular}
\end{table}

The first version of this paper drew the background from the labelled null pool, which no deployed assistant can do. Table~\ref{tab:background} compares the two protocols; in both the background is drawn after the store and is disjoint from it. The unfiltered history contains the evidence sessions and other turns on the topic of the cue, so its upper tail is heavier. recall-v1 is hardly affected, because the median and the MAD are robust to a few high scores. The cue-swap gate changes by at most $1.1$ points in false-alarm rate and $1.4$ points in hit rate on LongMemEval-M, and by at most $2.2$ points on S, since the current cue and the reference cues are standardised on the same background. The extreme-value gate fits an absolute tail to the background, so its p-values grow with the contamination: its false-alarm rate falls and its hit rate falls with it, sharply on S, where the evidence sessions are a large share of a short history.

A question's whole history in LongMemEval-S has about $240$ user turns, so the S background is all of the history outside the store: $235$ turns on average at $N=8$ and $115$ at $N=128$. Only ten S evaluation questions have a null pool of at least $256$ turns, so S is reported up to $N=128$. On M the background has $300$ items except in a few trials at $N=2048$ (mean $298$, minimum $221$); on LoCoMo the mean is $296$ at $N=128$ and $272$ at $N=256$.

\section{Cue-swap gate: number of reference cues and nominal level}
\label{app:ablation}

\begin{table}[h]
\centering
\small
\caption{Cue-swap gate on LongMemEval-M with the unfiltered-history background (bge-small, $210$ evaluation questions, $5$ draws per question and size, a run separate from Table~\ref{tab:swap}). ``max FA'' is the largest false-alarm rate over $N\in\{8,64,512,2048\}$, with the largest upper $95\%$ bootstrap limit in parentheses. With implicit cues ($194$ questions, $K\le199$) the largest false-alarm rates are $1.2\%$, $5.3\%$ and $10.5\%$ for the three levels.}
\label{tab:ablation}
\begin{tabular}{ccccccc}
\toprule
$\alpha$ & $K$ & max FA & hit, $N=8$ & hit, $N=64$ & hit, $N=512$ & hit, $N=2048$ \\
\midrule
\multirow{5}{*}{0.01} & 19 & .000 (.000) & .00 & .00 & .00 & .00 \\
 & 49 & .000 (.000) & .00 & .00 & .00 & .00 \\
 & 99 & .011 (.021) & .59 & .37 & .15 & .07 \\
 & 199 & .013 (.021) & .63 & .39 & .17 & .08 \\
 & 399 & .013 (.020) & .64 & .40 & .18 & .09 \\
\midrule
\multirow{5}{*}{0.05} & 19 & .065 (.081) & .77 & .58 & .31 & .18 \\
 & 49 & .051 (.066) & .78 & .58 & .30 & .18 \\
 & 99 & .061 (.084) & .82 & .62 & .33 & .21 \\
 & 199 & .060 (.080) & .83 & .63 & .34 & .22 \\
 & 399 & .060 (.082) & .84 & .63 & .33 & .22 \\
\midrule
\multirow{5}{*}{0.10} & 19 & .112 (.136) & .86 & .68 & .42 & .27 \\
 & 49 & .113 (.140) & .88 & .70 & .44 & .28 \\
 & 99 & .127 (.157) & .88 & .70 & .44 & .28 \\
 & 199 & .128 (.161) & .89 & .71 & .45 & .28 \\
 & 399 & .132 (.166) & .89 & .71 & .45 & .28 \\
\bottomrule
\end{tabular}
\end{table}

\paragraph{Intervals.} Table~\ref{tab:swapci} gives the false-alarm rate of the cue-swap gate with bootstrap intervals over questions (LongMemEval) or conversations (LoCoMo, ten clusters). Rates are pooled over trials, as in the gate tables, and each replicate takes the ratio of summed firings to summed trials over the resampled clusters. Simultaneous limits over the store sizes use a Bonferroni correction.

\begin{table}[h]
\centering
\small
\caption{False-alarm rate of the cue-swap gate ($K=199$, $\alpha=0.05$, bge-small) with pointwise $95\%$ intervals; the last row gives the largest simultaneous upper limit over $N$.}
\label{tab:swapci}
\begin{tabular}{lcccc}
\toprule
$N$ & M, explicit & M, implicit & LoCoMo, other conv. & LoCoMo, same conv. \\
\midrule
8 & .059 [.049, .070] & .041 [.033, .050] & .089 [.079, .103] & .031 [.027, .035] \\
64 & .041 [.032, .052] & .048 [.037, .060] & .098 [.081, .124] & .013 [.010, .016] \\
256 & .050 [.035, .068] & .045 [.032, .060] & .083 [.066, .108] & .006 [.002, .011] \\
2048 & .042 [.020, .066] & .030 [.012, .053] & & \\
\midrule
sim.\ upper & .087 & .088 & .155 & .036 \\
\bottomrule
\end{tabular}
\end{table}

\paragraph{Latency.} We timed full chronological decisions with the released implementation on the longest history, without a background, on one CPU thread, as CPU time with three warm-up decisions and the median over the next $20$. A decision recomputes the tf-idf weights from the store, scores the cue and every reference cue, each reference without its own session, standardises and ranks. It takes $24$\,ms at $N=128$ ($K=128$) and $308$\,ms at $N=2048$ ($K=199$), with $95$th percentiles of $25$ and $316$\,ms. Encoding an opening turn with bge-small takes $74$\,ms more.

\paragraph{Reference asymmetry.} Scoring a reference cue without its session takes its best score over $N-m$ memories, where $m$ is the size of that session, and the current cue's over $N$. We simulated independent scores with the session sizes and reference sampling of the chronological replay ($K=\min(199,n)$, $\alpha=0.05$). The false-alarm rate is $5.3\%$ at target $N=16$ against $4.0\%$ with symmetric references, $4.6\%$ against $4.2\%$ at $N=64$, and within $0.2$ points of the symmetric rate from $N=128$ on.

\section{Theory: transfer across questions and simulation}
\label{app:heldout}

\begin{table}[h]
\centering
\small
\caption{Hit rate of the cosine threshold tuned to $5\%$ false alarms on the calibration half, computed from calibration-half score distributions and observed on the evaluation half (LongMemEval-M, explicit questions), with $95\%$ bootstrap intervals over questions. $p_N$ falls below the pool resolution $1/M\approx4.2\times10^{-4}$ from $N=128$ on.}
\label{tab:heldout}
\begin{tabular}{lcccc}
\toprule
 & \multicolumn{2}{c}{bge-small} & \multicolumn{2}{c}{bge-large} \\
$N$ & calibration & observed & calibration & observed \\
\midrule
8 & .68 [.62, .73] & .71 [.65, .76] & .75 [.70, .80] & .78 [.74, .83] \\
32 & .51 [.45, .57] & .56 [.50, .61] & .58 [.52, .64] & .63 [.57, .68] \\
128 & .32 [.27, .38] & .35 [.31, .41] & .34 [.28, .39] & .37 [.31, .43] \\
512 & .14 [.10, .18] & .18 [.14, .22] & .19 [.15, .24] & .21 [.16, .26] \\
2048 & .09 [.06, .13] & .08 [.05, .12] & .11 [.08, .15] & .10 [.07, .13] \\
\bottomrule
\end{tabular}
\end{table}

\paragraph{Across questions.} Table~\ref{tab:heldout} shows the finite-pool hit rate from the calibration half. It is lower than the observation at every $N$ below $2048$, by at most $4.9$ points with bge-small and $6.0$ points with bge-large, and the intervals overlap throughout.

\paragraph{Across store sizes.} A power law $\Hit_N\propto N^{-\kappa}$ fitted to the observed hit rates at $N=32$, $64$ and $128$ gives $\kappa=0.33$ (bge-small) and $0.37$ (bge-large). Extrapolated to $N=256$, $512$, $1024$ and $2048$ it predicts $0.28$, $0.22$, $0.18$ and $0.14$ with bge-small, where $0.25$, $0.18$, $0.14$ and $0.08$ are observed, and $0.29$, $0.22$, $0.17$ and $0.13$ with bge-large, against $0.27$, $0.21$, $0.15$ and $0.10$.

\paragraph{Simulation.} With null scores $X\sim\mathrm{Exp}(1)$ and evidence scores $Y\sim\mathrm{Exp}(\mu)$ of rate $\mu$, $R(p)=p^\mu$ exactly, so $\kappa=\mu$ and $\ell\equiv1$. For $\mu=0.3$ and $0.6$ and $N$ from $8$ to $262{,}144$, the exact hit rate agrees with Monte Carlo estimates ($20{,}000$ trials, $N\le4096$) within sampling error, and its ratio to the limit $C(\alpha,\kappa)\lambda^\kappa N^{-\kappa}$ of Theorem~\ref{thm:power} is $1.0005$ at $N=8$ and at most $1.0001$ from $N=64$ on. In the binormal model the local exponent approaches its limit $1/s^2$ slowly: with $d=2$, $s=1$ it is $0.40$ at $N=8$, $0.54$ at $N=512$ and $0.64$ at $N=262{,}144$; with $d=3.6$, $s=1.3$ it rises from $0.09$ to $0.24$ against a limit of $0.59$. The exact hit rate stays within $2\%$ of the upper bound $R(p_N)$ of Proposition~\ref{prop:sandwich} in both cases.

\section{Growth of the null maximum}
\label{app:growth}

\begin{table}[h]
\centering
\small
\caption{Mean and $95\%$ quantile of the largest null $z$-score at the smallest and largest store ($N=8$ and $2048$ for LongMemEval-M, $8$ and $256$ for LoCoMo), and the residual sums of squares of the fits $a+b\sqrt{\ln N}$ and $a+b\ln N$ over all store sizes. ``ratio'' is the growth of the lexical quantile divided by that of the semantic one.}
\label{tab:growth}
\begin{tabular}{llccccc}
\toprule
setting & channel & mean & RSS $\sqrt{\ln N}$ / $\ln N$ & $95\%$ quantile & RSS $\sqrt{\ln N}$ / $\ln N$ & ratio \\
\midrule
\multirow{2}{*}{M, explicit, small} & semantic & 1.51 $\to$ 4.14 & .011 / .024 & 2.92 $\to$ 5.68 & .067 / .010 & \multirow{2}{*}{2.6} \\
 & lexical & 0.83 $\to$ 5.39 & .73 / .21 & 2.41 $\to$ 9.67 & 2.35 / .84 & \\
\multirow{2}{*}{M, explicit, large} & semantic & 1.46 $\to$ 4.18 & .010 / .029 & 2.83 $\to$ 5.84 & .025 / .025 & \multirow{2}{*}{2.4} \\
 & lexical & 0.83 $\to$ 5.39 & .73 / .22 & 2.36 $\to$ 9.55 & 2.16 / .79 & \\
\multirow{2}{*}{M, implicit, small} & semantic & 1.43 $\to$ 3.60 & .002 / .033 & 2.63 $\to$ 4.63 & .006 / .018 & \multirow{2}{*}{2.7} \\
 & lexical & 0.96 $\to$ 5.05 & .47 / .11 & 2.42 $\to$ 7.79 & .64 / .15 & \\
\multirow{2}{*}{M, implicit, large} & semantic & 1.37 $\to$ 3.63 & .002 / .031 & 2.56 $\to$ 4.70 & .008 / .035 & \multirow{2}{*}{2.5} \\
 & lexical & 0.96 $\to$ 5.05 & .47 / .11 & 2.41 $\to$ 7.79 & .61 / .14 & \\
\multirow{2}{*}{LoCoMo, small} & semantic & 1.34 $\to$ 2.69 & $<$.001 / .008 & 2.49 $\to$ 3.98 & .002 / .003 & \multirow{2}{*}{2.8} \\
 & lexical & 1.45 $\to$ 4.62 & .091 / .014 & 3.73 $\to$ 7.87 & .057 / .004 & \\
\multirow{2}{*}{LoCoMo, large} & semantic & 1.33 $\to$ 2.75 & $<$.001 / .006 & 2.60 $\to$ 4.23 & .007 / $<$.001 & \multirow{2}{*}{2.5} \\
 & lexical & 1.45 $\to$ 4.62 & .090 / .013 & 3.74 $\to$ 7.87 & .054 / .001 & \\
\bottomrule
\end{tabular}
\end{table}

The mean of the semantic maximum is fitted better by $\sqrt{\ln N}$ and the mean of the lexical maximum by $\ln N$ in every setting. For the $95\%$ quantile the lexical channel again prefers $\ln N$, while the semantic channel prefers $\ln N$ in two settings, $\sqrt{\ln N}$ in three, including both with implicit cues, and neither in one (Table~\ref{tab:growth}).

\section{Several relevant memories}
\label{app:sensitivity}

\begin{table}[h]
\centering
\small
\caption{Hit rate on LongMemEval-M (bge-small, explicit questions, evaluation half) when the positive store holds, besides $N-m$ null turns, one evidence turn (one), every evidence turn of the question (all), one evidence turn three times (repeated), or every user turn of one evidence session (session). For the session stores, ``answer'' counts a hit only on the evidence turn and ``any'' on any turn of the session. Null trials and thresholds are those of the main run.}
\label{tab:sensitivity}
\begin{tabular}{lcccccc}
\toprule
gate & $N$ & one & all & repeated & session, answer & session, any \\
\midrule
\multirow{3}{*}{cosine, oracle per $N$} & 8 & .71 & .80 & .71 & .47 & .82 \\
 & 128 & .35 & .51 & .36 & .28 & .43 \\
 & 2048 & .08 & .12 & .08 & .07 & .12 \\
\midrule
\multirow{3}{*}{$z$-score, oracle per $N$} & 8 & .83 & .89 & .82 & .61 & .89 \\
 & 128 & .54 & .64 & .54 & .45 & .61 \\
 & 2048 & .17 & .23 & .15 & .14 & .19 \\
\midrule
\multirow{3}{*}{extreme-value} & 8 & .78 & .84 & .70 & .59 & .82 \\
 & 128 & .53 & .61 & .36 & .39 & .52 \\
 & 2048 & .37 & .48 & .34 & .31 & .41 \\
\midrule
\multirow{3}{*}{cue-swap} & 8 & .83 & .90 & .84 & .63 & .92 \\
 & 128 & .53 & .64 & .54 & .44 & .61 \\
 & 2048 & .22 & .29 & .22 & .19 & .26 \\
\bottomrule
\end{tabular}
\end{table}

\section{TriggerBench}
\label{app:triggerbench}

We use the positive-clean and negative-clean variants of the released prospective-memory set (commit \texttt{9e2185c}), both lengths. The memories of an item are its user turns before the trigger; turns labelled \texttt{constraint}, \texttt{rule\_constraint} or \texttt{condition\_constraint} are the evidence. Grown stores add filler user turns drawn from the 40K split, excluding turns that appear in the base split, three draws per item and size; the background for standardisation is $300$ further filler turns. A size enters for an item only if the item's full store fits, so the positive and null conditions of a scenario cover the same sizes, and a store of exactly $N$ turns takes no filler; at $N=8$ this leaves $173$ scenarios, against $245$ at the other sizes. Thresholds of the fixed cosine gate ($N=8$) and of the per-size $z$-score gate are the $95\%$ quantiles of the null condition on a hash-defined half of the items; Table~\ref{tab:triggerbench} reports the other half.

\begin{table}[!ht]
\centering
\small
\caption{TriggerBench, bge-small, evaluation half. Positive: hit rate (the gate speaks and its top memory is the constraint). Null: firing rate with the constraint removed. Negative: firing rate on the benchmark's resolved or non-applicable controls. ``base'' and ``40K'' are the natural conversations ($2$--$16$ user turns, about $8$ on average, and up to about $640$); the other columns are grown stores.}
\label{tab:triggerbench}
\begin{tabular}{llccccccc}
\toprule
condition & gate & 8 & base & 32 & 128 & 40K & 512 & 2048 \\
\midrule
\multirow{4}{*}{positive} & $z$-score, oracle & .05 & .09 & .08 & .08 & .08 & .08 & .07 \\
 & cue-swap & .18 & .27 & .24 & .21 & .19 & .18 & .16 \\
 & recall-v1 & .17 & .27 & .26 & .25 & .22 & .23 & .21 \\
 & fixed cosine & .04 & .05 & .05 & .05 & .05 & .05 & .05 \\
\midrule
\multirow{4}{*}{null} & $z$-score, oracle & .04 & .04 & .03 & .04 & .04 & .04 & .05 \\
 & cue-swap & .37 & .36 & .32 & .27 & .20 & .22 & .16 \\
 & recall-v1 & .48 & .50 & .47 & .45 & .34 & .44 & .44 \\
 & fixed cosine & .10 & .10 & .10 & .10 & .11 & .11 & .12 \\
\midrule
\multirow{4}{*}{negative} & $z$-score, oracle & .17 & .16 & .15 & .15 & .15 & .15 & .14 \\
 & cue-swap & .62 & .65 & .53 & .44 & .42 & .37 & .31 \\
 & recall-v1 & .70 & .76 & .67 & .63 & .54 & .60 & .58 \\
 & fixed cosine & .17 & .15 & .15 & .15 & .17 & .16 & .17 \\
\bottomrule
\end{tabular}
\end{table}

The constraint has the highest cosine in the natural base stores for $71\%$ of logical-adherence, $67\%$ of safe-coding, $57\%$ of attention-recovery, $37\%$ of temporal-grounding and $13\%$ of state-tracking items, and in the 40K stores for $61\%$, $29\%$, $57\%$, $1\%$ and $0\%$.

\section{Experimental details}
\label{app:details}

Embeddings use a maximum length of $256$ tokens and no query instruction, matching the deployed plugin. For implicit cues, Qwen2.5-14B received the question and its answer and was asked for one first-person message sent during an everyday task in which recalling the memory would help; the message had to avoid asking about the memory, the phrases ``remember'' and ``did I'', and the answer itself. Rewrites were dropped if they contained the answer string, one of these phrases or ``I told you'', or fewer than ten characters. A manual check of a random sample found most rewrites acceptable but repetitive in form, often beginning with ``I should check'', and a few too vague to evoke any memory. All runs used a single RTX 3090.

\section{Annotation}
\label{app:annotation}

The false alarms were sampled under the version-1 protocol, with the background from the null pool, and each question--memory pair is kept once across sizes, so the sample describes the kind of memory that fires and not its frequency under the main protocol. With the first $38$ items, labelled before the translation fix, the majority-vote helpful shares at $N=64$, $512$ and $2048$ are $5\%$, $5\%$ and $11\%$ of all $240$ false alarms. The guideline defined three labels and a fourth for undecidable items. \emph{Helpful}: when responding to what the user said, the assistant would use concrete content from the memory, including a partial answer such as one amount of several or a date needed for a time span; possibly related background does not count. \emph{Same topic, not useful}: the topic matches but the response would not use the memory. \emph{Unrelated}: little to do with the cue. Each LongMemEval session carries a date, so a memory saying that something happened ``today'' supplies a date. The guideline did not state this, and several disagreements stemmed from it; future annotation guidelines should state it explicitly. The human annotator saw the English text and a Chinese machine translation produced jointly for cue and memory with proper nouns kept in English; an earlier translation pass that translated them separately caused errors on the first $38$ items. Those items are excluded from the agreement statistics also because the Claude annotator saw the human labels for them before labelling; the remaining $232$ were labelled by all three without access to any other labels. The LLM annotators were Claude Opus~5.5 and GPT-6 Sol, each given the English text and the guideline in a fresh session. The human marked $5$ items as undecidable, $2$ of them among the first $38$, which leaves $229$ of the $232$ items for the agreement statistics.
\end{document}